\newcommand{\appropto}{\mathrel{\vcenter{
  \offinterlineskip\halign{\hfil$##$\cr
    \propto\cr\noalign{\kern2pt}\sim\cr\noalign{\kern-2pt}}}}}
\def\eqref#1{equation~\ref{#1}}
\def\1{\bm{1}}
\DeclareMathAlphabet{\mathsfit}{\encodingdefault}{\sfdefault}{m}{sl}
\SetMathAlphabet{\mathsfit}{bold}{\encodingdefault}{\sfdefault}{bx}{n}
\def\eqref#1{Eq.~(\ref{#1})}
\newcolumntype{Y}{>{\centering\arraybackslash}X} % centred X column
\newcommand{\Dc}{{\mathcal{D}}}
\title{PCPO: Proportionate Credit Policy Optimization for Aligning Image Generation Models}
\author{Jeongjae Lee \& Jong Chul Ye \\
KAIST \\
\texttt{\{jaylee2000,jong.ye\}@kaist.ac.kr} \\
}
\begin{document}

\maketitle

\begin{abstract}
While reinforcement learning has advanced the alignment of text-to-image (T2I) models,
state-of-the-art policy gradient methods are still hampered by training instability
and high variance, hindering convergence speed and compromising image quality.
Our analysis identifies a key cause of this instability: disproportionate credit assignment,
in which the mathematical structure of the generative sampler produces volatile
and non-proportional feedback across timesteps.
To address this, we introduce \emph{Proportionate Credit Policy Optimization} (PCPO),
a framework that enforces proportional credit assignment through a
stable objective reformulation and a principled reweighting of timesteps.
This correction stabilizes the training process, leading to significantly accelerated
convergence and superior image quality. The improvement in quality is a direct
result of mitigating model collapse, a common failure mode in recursive training.
PCPO substantially outperforms existing policy gradient baselines on all fronts,
including the state-of-the-art DanceGRPO.
Code is available at \url{https://github.com/jaylee2000/pcpo/}.
\end{abstract}

\section{Introduction}

Modern T2I generation, dominated by powerful diffusion and flow
models~\citep{SDXL,esser2024scaling,labs2025flux1kontextflowmatching},
still struggles to create outputs that consistently align with human
preferences~\citep{google_gemini_image_2024}.
Group Relative Policy Optimization (GRPO), a form of Reinforcement Learning from
Human Feedback (RLHF) highly successful in large language
models (LLMs)~\citep{shao2024deepseekmath,deepseek-r1}, has emerged as the
state-of-the-art online policy gradient framework for aligning image generation
models~\citep{xue2025dancegrpo,liu2025flowgrpo,he2025tempflowgrpotimingmattersgrpo}.
Despite their success, GRPO methods often encounter training instability
and model collapse, limiting their performance and reliability.

In this work, we found that these issues stem from two fundamental limitations
that arise when applying policy gradients to generative samplers. First, the
standard objective is susceptible to numerical precision errors that skew
gradient magnitudes. Second, and more critically, the mathematical structure of
these samplers leads to \emph{disproportionate credit assignment}. This manifests
as a high-variance learning signal with volatile, non-proportional feedback
across timesteps—a primary source of instability that is highly detrimental to
the training process.

To address this, we introduce \emph{Proportionate Credit Policy Optimization} (PCPO),
a framework that targets both limitations. PCPO first enhances numerical stability
by reformulating the objective and, more importantly, ensures proportional
credit assignment with a principled reweighting schedule. These targeted
modifications accelerate convergence and produce superior samples by
mitigating model collapse.

\begin{figure}[!tb]
\centering
\includegraphics[width=0.95\linewidth]{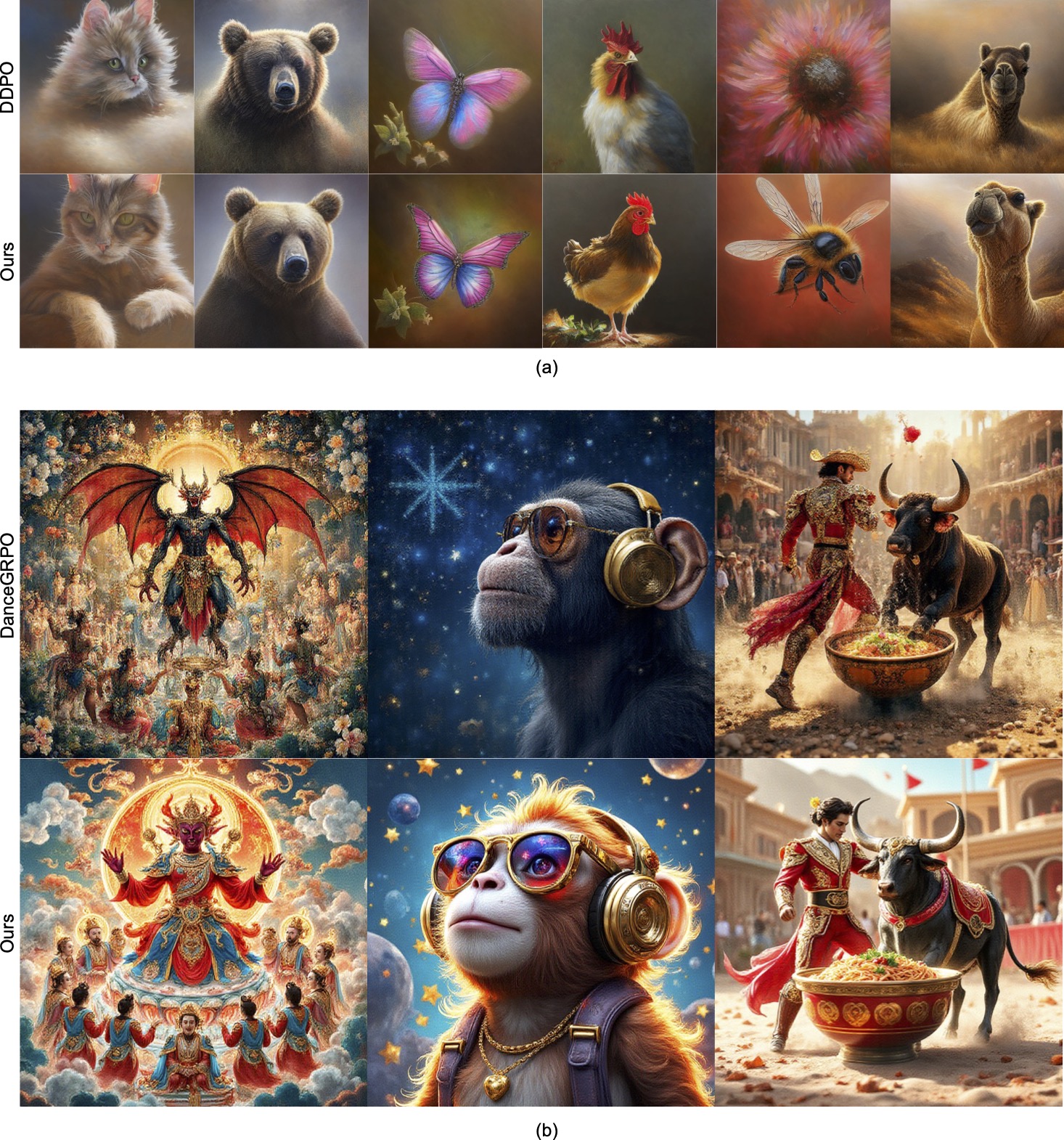}
\caption{Qualitative comparison of baseline methods (top) and PCPO (bottom) on
identical prompts and seeds. PCPO mitigates model collapse seen in baselines
across different frameworks. \textbf{(a) DDPO (SD1.5, Aesthetics):} At a matched
reward level, PCPO preserves diversity and fidelity while DDPO collapses into a blurry,
homogenous style. \textbf{(b) DanceGRPO (FLUX, HPSv2.1):} After training for 200
epochs, PCPO achieves both a higher reward and superior image
quality, avoiding artifacts observed in the baseline.}
\label{fig:image_grid}
\vspace{-1em}
\end{figure}

Our work is concurrent to several others aiming to improve alignment efficacy by
addressing suboptimal credit assignment. For instance, TempFlow-GRPO~\citep{he2025tempflowgrpotimingmattersgrpo}
uses trajectory branching and MixGRPO~\citep{li2025mixgrpounlockingflowbasedgrpo}
employs a sliding SDE window to focus optimization on high-impact timesteps,
primarily for training acceleration. While
TempFlow-GRPO also proposes a proportional reweighting scheme,
our
proportionality principle (eg. Proposition~\ref{prop:flowmatching}) offers a more fundamental explanation for these
improvements, successfully accounting for cases where simpler, empirical heuristics
fail. 
As such, PCPO stabilizes the training process, leading to significantly accelerated
convergence and superior image quality, and mitigating mode collapse.
Experimental results confirm that PCPO substantially outperforms existing policy gradient baselines on all fronts,
including the state-of-the-art DanceGRPO.

\noindent\textbf{Related Work.}
Aligning LLMs with human preferences is predominantly achieved through RLHF~\citep{christiano2023deepreinforcementlearninghuman}.
Early methods popularized Proximal Policy Optimization (PPO) for this purpose~\citep{schulman2017proximalpolicyoptimizationalgorithms,ouyang2022training}.
Subsequently, Direct Preference Optimization (DPO) emerged as a simpler,
reward-free alternative that gained widespread adoption by reframing alignment as
a supervised learning problem on pairwise preferences~\citep{rafailov2023direct}. However, recent
advancements have demonstrated the superior performance of policy-gradient methods:
notably, GRPO has surpassed previous techniques on complex reasoning tasks,
establishing a new state-of-the-art~\citep{shao2024deepseekmath,deepseek-r1}.

The evolution of preference alignment in T2I models has
mirrored the trends in LLMs. Initial efforts adapted PPO to the diffusion
process but were often plagued by training instability, limiting their scope to
constrained vocabularies~\citep{black2024training,fan2023dpok}. The subsequent adaptation of DPO
improved stability and broadened vocabulary coverage~\citep{Wallace_2024_CVPR, yang2024using}.
Most recently, GRPO-based frameworks have achieved state-of-the-art
performance~\citep{xue2025dancegrpo, liu2025flowgrpo}. This progression was theoretically
anticipated; as the optimal policy of DPO is upper-bounded by that of
policy-gradient methods~\citep{Xu-DPO-PPO}, stabilizing policy-gradient training
was expected to yield superior results.

A primary obstacle to enhancing stability and performance of policy-gradient training is model
collapse~\citep{Shumailov2024}, a degenerative process where a model
trained recursively on its own outputs progressively degrades. In the context of
online RL for T2I models, we observe this phenomenon manifesting in two key
failure modes. The first is classic \emph{mode collapse}, a loss of sample
diversity that is also well-documented in LLM alignment, where the policy's
entropy is exhausted in pursuit of high rewards~\citep{cui2025entropy,
park2025cliplowincreasesentropycliphigh}. The second is \emph{image quality degradation}, a form of
reward hacking where the model over-optimizes for the reward signal (e.g., an
aesthetic score) at the expense of general fidelity, producing artifacts and
unrealistic outputs~\citep{wang2025coefficientspreservingsamplingreinforcementlearning}.
In this work, we use
"model collapse" as the umbrella term to refer to this overall process where
both sample diversity and fidelity are compromised.

\section{PCPO: Proportionate Credit Assignment Policy Optimization}
\label{sec:methods}

\subsection{Preliminaries}
\noindent\textbf{Diffusion and Flow Matching.} Conditional diffusion probabilistic models~\citep{ho2020denoising}
learn to create data by reversing a Markovian forward process that gradually adds
Gaussian noise to a clean sample \(\mathbf{x}_0\). The model, \(\pmb{\varepsilon}_\theta(\mathbf{x}_t, t, c)\),
is trained to predict the noise \(\pmb{\epsilon}\) added at an intermediate state \(\mathbf{x}_t\),
typically by minimizing a weighted mean-squared error objective. Flow matching models~\citep{lipman2023flow}
simplify this process by learning the velocity \(\mathbf{u}_\theta(\mathbf{x}_t, t, c)\),
which is typically applied to follow the straight-line path between Gaussian noise and the data sample~\citep{liu2023flow}.
This allows for efficient generation via solving a deterministic ordinary differential equation (ODE).

\noindent\textbf{Policy Gradient Alignment.} We frame the image generation process
as a Markov Decision Process (MDP)~\citep{Bellman1957markovian}
following the formulation of \citet{black2024training}. The \(T\)-step reverse process
is defined by states \(s_t = (\mathbf{x}_t, t, c)\) and actions \(a_t = \mathbf{x}_{t-1}\),
conditioned on a prompt \(c\). A terminal reward \(r(\mathbf{x}_0,c)\) is assigned
at the final step. To improve sample efficiency, PPO performs multiple optimization
steps on trajectories from an older policy \(\pi_{\theta_\textrm{old}}\),
clipping the importance sampling ratio
\(\rho_t(\theta) := p_\theta^{(t)} / p_{\theta_\textrm{old}}^{(t)}\) to stabilize updates~\citep{schulman2017proximalpolicyoptimizationalgorithms,fan2023dpok}:
\begin{equation}
\label{eq:ppo-loss}
\mathcal{L}_{\mathrm{PPO}}(\theta) := \mathbb{E}_{\tau \sim p_{\theta_\text{old}}}
\left[ \sum_{t=1}^{T} \max
\left( -\rho_t A , -\mathrm{clip}_\xi(\rho_t) A \right) \right],
\end{equation}
where \(A\) is the normalized terminal reward and 
\(\mathrm{clip}_{\xi}(\rho_t):= \mathrm{clip}(\rho_t,1+\xi, 1-\xi)\)
for clipping threshold \(\xi\).
The state-of-the-art GRPO~\citep{shao2024deepseekmath} employs an analogous objective,
enhancing stability by performing group-relative reward normalization to calculate the advantage, i.e.
$\hat{A}^i = (r^i - \mu_G) / \sigma_G$, from a group of $G$ samples.
Since our contribution, PCPO, focuses on modifying the policy ratio $\rho_t$ and
its underlying credit assignment---mechanisms common to both frameworks---we
proceed using the simpler PPO notation for our derivation.
We omit the KL penalty term for simplicity, following prior work~\citep{black2024training,xue2025dancegrpo}.

\subsection{PCPO Derivation}
\label{sec:pcpo}

\noindent\textbf{PCPO for Diffusion Models.}
\label{sec:pcpo-diffusion}
Following \citet{huangppoclip}, we note that the gradient of \eqref{eq:ppo-loss}
is equivalent to that of a hinge loss,
\begin{equation}
\label{eq:hinge-loss}
\mathcal{L}_{\text{hinge}} := \mathbb{E} [\sum_t \max\{0, \xi|A|-A(\rho_t-1)\}].
\end{equation}
We stabilize this objective by replacing the numerically unstable term
\(\rho_t - 1\) with the more robust \(\log \rho_t\). This choice is
justified in two ways. First, under the hinge loss
interpretation~\citep{huangppoclip}, this term acts as a swappable ``classifier,''
allowing us to substitute different functions while maintaining the core
mechanism. Second, it is a sound Taylor approximation for small policy updates
(\(\log \rho_t \approx \rho_t - 1\)), a condition enforced by
the small clipping range in our experiments; we empirically confirmed that this
approximation error never exceeded 1.2\% during training.
More justification is provided in Appendix~\ref{app:rho_approximation}.
This leads to our stable log-hinge objective:
\begin{equation}
\label{eq:log-hinge-loss}
\mathcal{L}_{\text{PCPO-base}}(\theta) := \mathbb{E}
\left[ \sum_{t=1}^{T} \max\! \bigl\{0,\; \xi|A|-A\,\log\rho_t \bigr\} \right].
\end{equation}
The core issue of disproportionate credit, however, lies within the
\(\log \rho_t\) term itself. We decompose this term in the following proposition,
with the full derivation provided in Appendix~\ref{sec:proof-prop1}.
\begin{restatable}[]{proposition}{epsilonmatching}
\label{prop:epsilonmatching}
For a DDIM sampling schedule, the log policy ratio \(\log \rho_t\) is given by:
\begin{equation}
\label{eq:logrho}
  \log\rho_t =
  -\Bigl[w(t) (\hat{\pmb\varepsilon}_\theta^{(t)}-\hat{\pmb\varepsilon}_{\text{old}}^{(t)})
    \!\cdot\! \pmb{\epsilon}_{\text{old}}^{(t)} \;+\; \tfrac12
    \bigl\| w(t)( \hat{\pmb\varepsilon}_\theta^{(t)} - \hat{\pmb\varepsilon}_{\text{old}}^{(t)} )
    \bigr\|^{2}
  \Bigr],
  \qquad w(t)=\frac{C(t)}{\sigma_t},
\end{equation}
where
\[ C(t) = \frac{\sqrt{1-\bar{\alpha}_t}}{\sqrt{\alpha_t}} - \sqrt{1-\bar{\alpha}_{t-1}-\sigma_t^{2}}\;>\;0. \]
\end{restatable}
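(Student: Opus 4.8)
The plan is to read $\log\rho_t$ off the DDIM transition kernel, which is Gaussian with a mean set by the network's noise prediction and a fixed variance $\sigma_t^2$, and then exploit the fact that the trajectory was rolled out under $\pi_{\theta_\text{old}}$. First I would write the reverse step as a Gaussian transition $p_\theta^{(t)}(x_{t-1}\mid x_t)=\mathcal{N}(x_{t-1};\mu_\theta^{(t)},\sigma_t^2 I)$. Substituting the predicted clean image $\hat{x}_0=(x_t-\sqrt{1-\bar\alpha_t}\,\hat{\varepsilon}_\theta^{(t)})/\sqrt{\bar\alpha_t}$ into the standard DDIM mean and using $\bar\alpha_t=\alpha_t\bar\alpha_{t-1}$ collapses the mean to $\mu_\theta^{(t)}=x_t/\sqrt{\alpha_t}-C(t)\,\hat{\varepsilon}_\theta^{(t)}$, with $C(t)$ exactly as in the statement. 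The key structural point is that the $x_t/\sqrt{\alpha_t}$ term is identical for $\theta$ and $\theta_\text{old}$ (both policies condition on the same state $x_t$), so only the prediction difference survives in $\mu_\text{old}^{(t)}-\mu_\theta^{(t)}=C(t)(\hat{\varepsilon}_\theta^{(t)}-\hat{\varepsilon}_{\text{old}}^{(t)})$.

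Next I would form $\log\rho_t=\log p_\theta^{(t)}-\log p_{\theta_\text{old}}^{(t)}$. Since both kernels share the variance $\sigma_t^2$, the Gaussian normalization constants cancel and only the quadratic exponents remain, giving $\log\rho_t=(2\sigma_t^2)^{-1}\bigl(\|x_{t-1}-\mu_\text{old}^{(t)}\|^2-\|x_{t-1}-\mu_\theta^{(t)}\|^2\bigr)$. I would then invoke the rollout reparametrization: because $x_{t-1}$ was sampled from the old policy, $x_{t-1}-\mu_\text{old}^{(t)}=\sigma_t\,\epsilon_{\text{old}}^{(t)}$ where $\epsilon_{\text{old}}^{(t)}$ is the realized injection noise. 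Writing $x_{t-1}-\mu_\theta^{(t)}=\sigma_t\epsilon_{\text{old}}^{(t)}+C(t)(\hat{\varepsilon}_\theta^{(t)}-\hat{\varepsilon}_{\text{old}}^{(t)})$ and expanding the difference of squares cancels the $\sigma_t^2\|\epsilon_{\text{old}}^{(t)}\|^2$ terms, leaving a cross term and a quadratic term; dividing by $2\sigma_t^2$ and factoring $w(t)=C(t)/\sigma_t$ produces exactly \eqref{eq:logrho}. The positivity claim $C(t)>0$ I would dispatch last by squaring the defining inequality (both sides nonnegative under the DDIM constraint $\sigma_t^2\le 1-\bar\alpha_{t-1}$) and using $\bar\alpha_t=\alpha_t\bar\alpha_{t-1}$, which reduces it to $(1-\alpha_t)/\alpha_t\ge -\sigma_t^2$, automatic since $\alpha_t\in(0,1]$.

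The algebra is routine Gaussian manipulation; the one place demanding care is the bookkeeping of two distinct noise objects. The realized injection noise $\epsilon_{\text{old}}^{(t)}$ — the actual Gaussian sample drawn during the old-policy rollout, which defines the realized $x_{t-1}$ — must be kept strictly separate from the network predictions $\hat{\varepsilon}_\theta^{(t)}$ and $\hat{\varepsilon}_{\text{old}}^{(t)}$; conflating them would destroy the clean inner-product structure of the cross term. I therefore expect the main obstacle to be conceptual rather than computational: recognizing that it is precisely the on-policy identity $x_{t-1}-\mu_\text{old}^{(t)}=\sigma_t\epsilon_{\text{old}}^{(t)}$ that converts the abstract log-ratio into the interpretable inner-product-plus-penalty form, which is exactly what the subsequent proportional-credit analysis relies on.
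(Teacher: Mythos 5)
Your proposal is correct and follows essentially the same route as the paper's proof: write both DDIM kernels as Gaussians with shared variance $\sigma_t^2 I$, reduce the mean difference to $C(t)(\hat{\varepsilon}_\theta^{(t)}-\hat{\varepsilon}_{\text{old}}^{(t)})$, apply the on-policy reparametrization $x_{t-1}-\hat{\mu}_{\text{old}}^{(t-1)}=\sigma_t\epsilon_{\text{old}}^{(t)}$, and expand the difference of squared norms. Your squaring argument for $C(t)>0$ is a small correct addition the paper states without proof (note strictness requires $\alpha_t<1$ or $\sigma_t>0$, which holds for any nontrivial schedule).
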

Here, \(\hat{\pmb\varepsilon}\) denotes the denoiser's noise prediction\footnote{
Unless otherwise specified, \(\hat{\pmb\varepsilon}\) is shorthand notation
for noise prediction with classifier-free guidance~\citep{ho2021classifierfree}:
\((1 - w_\text{CFG})\hat{\pmb\varepsilon}(\cdot, \emptyset)  + w_\text{CFG}\hat{\pmb\varepsilon}(\cdot, c)\).},
whereas \(\pmb{\epsilon}_\text{old}\) is the Gaussian noise sampled during reverse sampling
under the old policy.
Substituting \eqref{eq:logrho} into \eqref{eq:log-hinge-loss} transforms the
PPO objective into an equivalent \(\varepsilon\)-matching loss
\begin{align}
\label{eq:epsilon-matching}
  \mathcal{L}_{\varepsilon\text{-matching}}(\theta) &= \mathbb{E}
  \Bigl[ \sum_{t=1}^{T} \max\!
    \Bigl\{0,\; \xi|A|+A \Dc(
      w(t), \hat{\pmb\varepsilon}_\theta^{(t)},
      \hat{\pmb\varepsilon}_{\text{old}}^{(t)}, \pmb{\epsilon}_{\text{old}}^{(t)} )
    \Bigr\}
  \Bigr]
\end{align}
where
\begin{align}
\label{eq:Delta}
  \Dc( w(t), \hat{\pmb\varepsilon}_\theta^{(t)},
    \hat{\pmb\varepsilon}_{\text{old}}^{(t)}, \pmb{\epsilon}_{\text{old}}^{(t)} ) &:=
    \,w(t) (\hat{\pmb\varepsilon}_\theta^{(t)}-\hat{\pmb\varepsilon}_{\text{old}}^{(t)})
    \!\cdot\! \pmb{\epsilon}_{\text{old}}^{(t)} \;+\; \tfrac12
  \bigl\| w(t)( \hat{\pmb\varepsilon}_\theta^{(t)} - \hat{\pmb\varepsilon}_{\text{old}}^{(t)} ) \bigr\|^{2}.
\end{align}
This decomposition reveals that the gradient contribution of each timestep is
scaled by a native weight \(w(t)\) that is highly non-uniform, spanning orders of
magnitude (Figure~\ref{fig:w-and-sigma}a). This variance is a primary source of
training instability, as it both causes gradients from different timesteps to be
scaled inconsistently and leads to the most amplified gradients being clipped
disproportionately often.

We argue that for proper credit assignment, these weights should be uniform.
This principle is justified by a direct analogy to the foundational
REINFORCE policy gradient algorithm~\citep{Williams:92, Sutton1998}.
In that framework, parameter updates are proportional to the \emph{eligibility vector}
(the policy gradient term), which is scaled by each action's contribution
(often assumed to be uniform).
Our analysis (see Appendix~\ref{app:reinforce_analogy}) shows that the
diffusion sampler's gradient formulation is analogous, but with a critical distinction:
it scales this "eligibility vector" by a non-uniform, arbitrary weight $w(t)$.
This $w(t)$ is an artifact of the sampler's mathematics, not a
deliberate credit assignment strategy. This native scaling introduces high
variance by weighting the credit $A$ based on the noise schedule rather than
the step's actual importance.

PCPO restores credit assignment proportional to the integration interval by
re-engineering the DDIM variance schedule,
\(\tilde{\sigma}_t\), to produce a constant weight, \(w(t) = w^{\star}\), for
all timesteps. To do this, we use the definition of the weight from
Proposition~\ref{prop:epsilonmatching}, \(w(t) = C(t)/\sigma_t\). For each
timestep \(t\), we set the weight on the left-hand side to our target constant
\(w^{\star}\). With the standard DDIM schedule (\(\alpha_t\)) fixed on the
right-hand side, the only free parameter remaining in the equation is the
variance \(\sigma_t\). We can therefore solve for the precise value of
\(\sigma_t\) at each step that yields our desired constant weight. To ensure a
fair comparison and isolate the effect of this uniform weighting, we rescale
\(w^{\star}\) to match the mean of the original, non-uniform weights (see
Figure~\ref{fig:w-and-sigma}(a, b)).

\begin{figure}[!t]
\centering
\includegraphics[width=\linewidth]{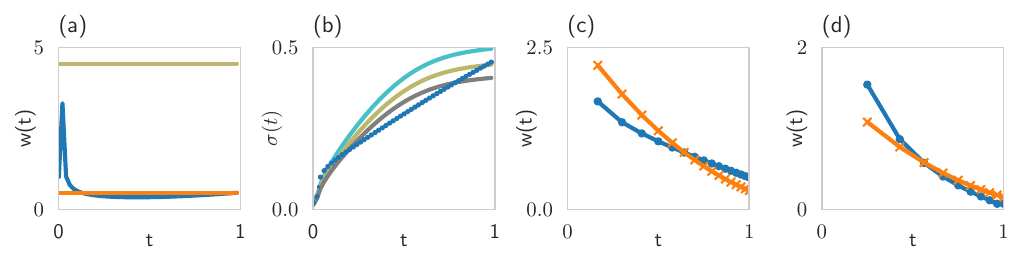}
\vspace{-2em}
\caption{\textbf{Weight rescaling by PCPO.} \textbf{DDIM Sampler:} (a) Volatile
native weights \(w(t)\) (\textcolor{blue}{blue}) are replaced with uniform, rescaled
weight (\textcolor{orange}{orange}). (b) This is achieved by computing
a new variance signal \(\tilde{\sigma}_t\) that remains close to the original
(corresponding to \(w^\star = 4.5\) (\textcolor{olive}{olive})), then rescaling. \textcolor{Cerulean}
{Light blue} corresponds to \(w^\star = 4.0\), \textcolor{gray}{gray} to \(w^\star = 5.0\).
\textbf{SDE Sampler:} Native (\textcolor{blue}{blue}) and rescaled (\textcolor{orange}{orange})
weights for (c) DanceGRPO SDE, (d) Flow-GRPO SDE.
}
\vspace{-1em}
\label{fig:w-and-sigma}
\end{figure}

\noindent\textbf{PCPO for Flow Models.}
Applying policy gradients to flow models requires introducing stochasticity
via a reverse SDE with the same marginal probability densities as the original ODE \citep{xue2025dancegrpo,liu2025flowgrpo}:
\begin{equation}
\label{eq:flow-sde}
   \text{d}\mathbf{x}_t = (\mathbf{u}_t - \frac{1}{2}\sigma_t^2 \nabla \log p_t(\mathbf{x}_t))dt + \sigma_t \text{d}\mathbf{w}.
\end{equation}
Integrating this SDE using first-order Euler-Maruyama discretization allows for trajectory sampling,
where the log policy ratio for a single step can be simplified to a form analogous
to \eqref{eq:logrho}:
\begin{equation}
\label{eq:logrho-flow}
  \log\rho_{t_i} = -
   \Bigl[ w(t_i)(\mathbf{u}_\theta - \mathbf{u}_\text{old}) \cdot \pmb{\epsilon}_\text{old}^{(t_i)} +  \frac{1}{2} \|w (t_i)(\mathbf{u}_\theta - \mathbf{u}_\text{old}) \|^2 \Bigr],
   \quad w(t_i) = \frac{\sqrt{\Delta t_i}}{\sigma_{t_i}}(1 + \frac{(1-t_i)\sigma_{t_i}^2}{2t_i}).
\end{equation}

The challenge of disproportionate credit is particularly complex in modern
flow-matching models. We illustrate this issue using the DanceGRPO SDE, where
\(\sigma_{t_i}\) is a constant, \(\eta\)~\citep{xue2025dancegrpo}. For high-resolution synthesis,
the timestep shifting technique~\citep{esser2024scaling}
creates non-uniform integration intervals \(\Delta t_i\), making native weights
highly non-proportional to the interval length (\(w(t_i) \appropto \sqrt{\Delta t_i}\)).
While we restored proportionality in diffusion models with a minor, non-degrading
adjustment to its variance schedule, an analogous modification for flow models
requires drastic changes to the variance schedule or the timestep shifting strategy.
Both options are problematic, as it significantly deviates from the original, well-optimized
sampling procedure~\citep{xue2025dancegrpo, esser2024scaling}.
Therefore, PCPO takes a different approach for flow models: it enforces
proportionality by directly reweighting the training objective, as defined in
the following proposition.

\begin{restatable}[]{proposition}{flowmatching}
\label{prop:flowmatching}
For a flow matching SDE in the form of \eqref{eq:flow-sde}, the weight
schedule \(w(t_i)\) that ensures credit is proportional to the integration
interval \(\Delta t_i\) is given by:
\begin{equation}
\label{eq:optimalw}
  w(t_i) =  \zeta \Delta t_i,
  \qquad \zeta = \sum_{i=1}^N \frac{\sqrt{\Delta t_i}}{\sigma_{t_i}}(1 + \frac{(1-t_i)\sigma_{t_i}^2}{2t_i}). 
\end{equation}
\end{restatable}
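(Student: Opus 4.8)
The plan is to adapt the on-policy credit-assignment argument already established for diffusion models in Section~\ref{sec:pcpo} to the flow setting, where the integration intervals $\Delta t_i$ are non-uniform. The argument reduces to one reweighting identity followed by a normalization that fixes the single free constant $\zeta$.

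First, I would reproduce the on-policy gradient analysis for the flow log-ratio. Differentiating \eqref{eq:logrho-flow} and evaluating in the on-policy regime, where $\mathbf{u}_\theta = \mathbf{u}_\text{old}$ so that the quadratic term and its gradient vanish, the surviving gradient contributed by step $t_i$ is proportional to $A\,w(t_i)\,(\nabla_\theta \mathbf{u}_\theta \!\cdot\! \epsilon_{\text{old}}^{(t_i)})$, exactly the form obtained for diffusion. Invoking the same assumption that the expected statistics of the dot-product term are constant across timesteps, the credit (gradient magnitude) assigned to step $t_i$ is therefore proportional to $w(t_i)$.

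Second, I would impose the proportionality principle in its flow-adapted form: because each Euler--Maruyama step now covers an interval $\Delta t_i$ rather than an equal slice, credit should be proportional to $\Delta t_i$. Equating the two proportionalities, credit $\propto w(t_i)$ and credit $\propto \Delta t_i$, forces $w(t_i) = \zeta\,\Delta t_i$ with $\zeta$ independent of $i$, which is the functional form asserted in \eqref{eq:optimalw}.

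Third, and this is the only step that requires care, I would fix $\zeta$ by the scale-matching normalization used in the diffusion case, where $w^\star$ is set to the mean of the native weights. Requiring the reweighted schedule to preserve the native sampler's total (equivalently, mean) weight gives $\sum_{i=1}^N w(t_i) = \sum_{i=1}^N w_\text{native}(t_i)$, where $w_\text{native}(t_i) = \tfrac{\sqrt{\Delta t_i}}{\sigma_{t_i}}(1 + \tfrac{(1-t_i)t_i}{2})$ is the native flow weight from \eqref{eq:logrho-flow}. Substituting $w(t_i) = \zeta\,\Delta t_i$ and using that the steps partition the unit time horizon, so $\sum_{i=1}^N \Delta t_i = 1$, the left side collapses to $\zeta$, yielding $\zeta = \sum_{i=1}^N \tfrac{\sqrt{\Delta t_i}}{\sigma_{t_i}}(1 + \tfrac{(1-t_i)t_i}{2})$, precisely \eqref{eq:optimalw}. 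The main obstacle is conceptual rather than computational: proportionality alone pins $w(t_i)$ down only up to scale, so the identity $\sum_i \Delta t_i = 1$ together with the fidelity requirement of preserving the baseline's overall gradient magnitude are what make $\zeta$ uniquely determined and keep the comparison with DanceGRPO fair.
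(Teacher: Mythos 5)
Your proposal is correct and matches the paper's proof: the appendix argument is exactly your normalization step, equating $\sum_{i=1}^N w(t_i) = \sum_{i=1}^N w_\text{orig}(t_i)$ with $w(t_i) = \zeta\,\Delta t_i$ and (implicitly, where you make it explicit) $\sum_i \Delta t_i = 1$, yielding $\zeta = \sum_{i=1}^N \frac{\sqrt{\Delta t_i}}{\sigma_{t_i}}(1 + \frac{(1-t_i)t_i}{2})$. Your first two steps (the on-policy gradient analysis and the proportionality principle) are not in the appendix proof itself but faithfully reproduce the justification the paper gives in the main text, so the overall route is the same.
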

The proof for Proposition~\ref{prop:flowmatching} is in Appendix~\ref{app:optimalw}.

Figure~\ref{fig:w-and-sigma}(c, d) compares the default (vanilla) and our improved
(proportional) timestep weights for both the DanceGRPO and Flow-GRPO SDEs. The
plots are generated using hyperparameters from the original works: \(N=16,
\eta=0.3\) for DanceGRPO~\citep{xue2025dancegrpo}, and \(N=10, \eta=0.7\) for
Flow-GRPO~\citep{liu2025flowgrpo}.
Corresponding values of \(\zeta\) are \(13.343\) and \(4.315\), respectively.
For the Flow-GRPO visualization, we follow its
official implementation and approximate the final weight \(w_N\) (at \(t=1\)) to
avoid a divide-by-zero error in the variance schedule, \(\sigma_t = \eta
\sqrt{t/(1-t)}\). 

\section{Experiments}
\label{sec:experiments}

\subsection{Methods}

Our main analysis focuses on applying PCPO to two policy gradient frameworks:
DDPO~\citep{black2024training} on Stable Diffusion 1.5 (SD1.5; \citet{rombach2022high}), and
the state-of-the-art DanceGRPO~\citep{xue2025dancegrpo} on both Stable Diffusion 1.4
(SD1.4) and the FLUX.1-dev (FLUX) flow-matching
model~\citep{flux2024}.
We train DDPO on two reward models: Aesthetics~\citep{schuhmann2022laion} and
BERTScore~\citep{zhang2020bertscoreevaluatingtextgeneration}, and DanceGRPO on HPSv2.1~\citep{wu2023humanpreferencescorev2}.
Our evaluation is twofold: we first analyze \textit{training dynamics} by tracking reward
acceleration and clipping fractions throughout the learning trajectory. We
then assess \textit{sample quality} at matched reward levels using Fréchet
Inception Distance (FID)~\citep{FID_NIPS2017_8a1d6947}, Fréchet Distance DINOv2 ($\text{FD}_\text{DINO}$)~\citep{fddinov2},
Inception Score (IS)~\citep{InceptionScore},
and LPIPS Diversity~\citep{zhang2018unreasonableeffectivenessdeepfeatures}.
To account for per-prompt variance in these quality metrics, we validate our
findings using a Linear Mixed Model (LMM).
All main results are from experiments using main configurations from
Table~\ref{tab:master-hyperparams} in Appendix~\ref{app:exp-details}, with the
exception of the half-sized batch configurations used for LMM analysis.
All qualitative results are also generated from experiments using main configurations, with
the exception of Figure~\ref{fig:naive-acceleration}(b).

To further validate PCPO's robustness, we benchmark performance on unseen prompts
from the MSCOCO-2017 validation (5K)~\citep{mscoco} and MJHQ-30K~\citep{li2024playground} datasets,
evaluating on a diverse suite of alignment metrics: HPSv2.1, Aesthetics,
CLIPScore~\citep{hessel-etal-2021-clipscore}, PickScore~\citep{Pick-a-Pic}, and ImageReward~\citep{ImageReward}.
Next, we test PCPO's generalizability by applying it to the Flow-GRPO
SDE framework~\citep{liu2025flowgrpo} on the SD3.5-M model~\citep{esser2024scaling}---a substantially different architecture and
training setup that uses different rewards (OCR~\citep{OCR}, PickScore) and an auxiliary
KL divergence penalty.
Full experimental details are provided in Appendix~\ref{app:exp-details}.

\subsection{Results}
\noindent\textbf{PCPO Improves Training Efficiency and Stability.}
PCPO's core principle of proportionate credit assignment translates directly to
enhanced training stability. As shown in Figure~\ref{fig:reward-traj-comparison},
PCPO consistently maintains a lower and more stable clipping fraction than the
baselines. This stability is the key to its faster convergence, leading to
substantial training acceleration across all experimental settings
(Table~\ref{tab:efficiency-results}). A detailed breakdown of speedups is
available in Table~\ref{tab:full-speedup-results} in Appendix~\ref{app:additional-results}.

\begin{figure}[!t]
  \centering
  \vspace{-1em}
  \includegraphics[width=\linewidth]{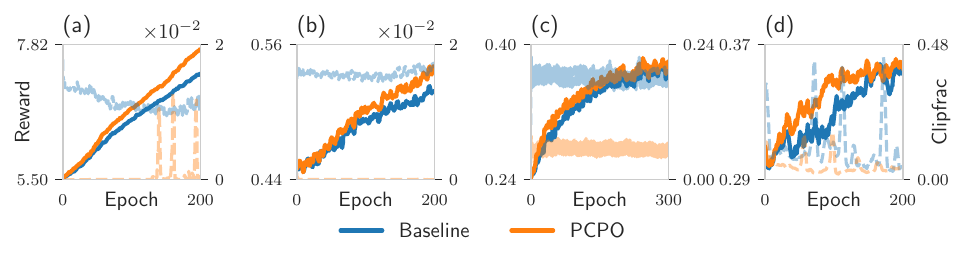}
  \vspace{-2em}
  \caption{Reward and clipping fraction traces for PCPO (\textcolor{orange}{orange})
  vs. baselines (\textcolor{blue}{blue}): (a) DDPO, Aesthetics, (b) DDPO, BERTScore,
  (c) DanceGRPO (SD1.4), HPSv2.1, (d) DanceGRPO (FLUX), HPSv2.1.}
  \label{fig:reward-traj-comparison}
\end{figure}

\begin{table}[!t]
    \centering
    \caption{Training Efficiency of Baseline vs. PCPO.
    Speedup in epochs translate directly to wall-clock time savings (see Figure~\ref{fig:wall_clock_placeholder}).}
    \label{tab:efficiency-results}
    \resizebox{0.83\linewidth}{!}{
    \begin{tabular}{lccccc}
        \toprule
        Baseline & Reward & Target Level & \(\text{Epochs}_\text{Baseline}\) & \(\text{Epochs}_\text{PCPO}\) & Speedup \\
        \midrule
        DDPO & Aesthetics & 6.90 & 147 & \textbf{118} & \textbf{24.6\%} \\
        DDPO & BERTScore & 0.52 & 191 & \textbf{146} & \textbf{30.8\%} \\
        DanceGRPO (SD1.4) & HPS & 0.370 & 236 & \textbf{188} & \textbf{25.5\%} \\
        DanceGRPO (FLUX) & HPS & 0.360 & 209 & \textbf{148} & \textbf{41.2\%} \\
        \bottomrule
    \end{tabular}
    }
\end{table}

% --- New Full-Width Side-by-Side Table (Replaces old Tables 2 & 4) ---
\begin{table}[t!]
\centering
\caption{Raw scores for (a) DDPO (Aesthetics) and (b) DanceGRPO (HPS) experiments.
PCPO was put at a disadvantage for the DanceGRPO evaluation. Statistically significant differences
in \textbf{bold}.}
\label{tab:raw-scores-merged}
% --- (a) LEFT TABLE: DDPO ---
\begin{minipage}[t]{0.49\linewidth}
    \centering
    {(a)}
    \vspace{-1em}
    \resizebox{\linewidth}{!}{
    \begin{tabular}{ll cccc}
        \toprule
        \textbf{Batch} & \textbf{Method} & \textbf{FID} & ${\text{FD}}_\textrm{DINO} $ & \textbf{IS*} & \textbf{LPIPS} \\
        \midrule
        \multirow{2}{*}{256} & Baseline & 31.72 & 473.17 & 26.35 & 0.6208 \\
                             & PCPO     & \textbf{27.86} & {461.69} & \textbf{24.12} & {0.6262} \\
        \cmidrule{2-6}
        \multirow{2}{*}{512} & Baseline & 24.09 & 451.19 & 25.67 & 0.6321 \\
                             & PCPO     & \textbf{22.06} & {391.30} & \textbf{25.65} & \textbf{0.6525} \\
        \bottomrule
    \end{tabular}
    }
\end{minipage}
% --- (b) RIGHT TABLE: DanceGRPO ---
\begin{minipage}[t]{0.49\linewidth}
    \centering
    {(b)}
    \vspace{0.5em}
    \resizebox{\linewidth}{!}{
    \begin{tabular}{ll cccc}
        \toprule
        \textbf{Model} & \textbf{Method} & \textbf{FID} & ${\text{FD}}_\textrm{DINO} $ & \textbf{IS*} & \textbf{LPIPS} \\
        \midrule
        \multirow{2}{*}{SD1.4} & Baseline & 90.34 & 1078.42 & 7.61 & {0.4948} \\
                               & PCPO     & \textbf{84.74} & {1035.45} & {7.50} & 0.4894 \\
        \cmidrule{2-6}
        \multirow{2}{*}{FLUX}  & Baseline & 46.23 & 539.83 & 12.66 & {0.5736} \\
                               & PCPO     & \textbf{40.38} & \textbf{438.88} & \textbf{11.90} & 0.5708 \\
        \bottomrule
    \end{tabular}
    }
\end{minipage}
\vspace{-1em}
\end{table}

\begin{wrapfigure}{R}{0.5\textwidth}
  \vspace{-1em}
  \captionof{table}{LMM analysis for DDPO (Aesthetics).}
  \label{tab:ddpo-lmm-analysis}
  \centering
  \resizebox{0.9\linewidth}{!}{
  \small 
  \begin{tabular}{l lrr}
      \toprule
      \textbf{Effect} & \textbf{Metric} & \textbf{Coeff. ($\beta$)} & \textbf{p-value} \\
      \midrule
      \multirow{4}{*}{Algorithm} & FID & \textbf{-7.750} & \textbf{0.047} \\
                                  & ${\text{FD}}_\textrm{DINO} $ & -52.831 & 0.247 \\ 
                                  & IS* & \textbf{-0.241} & \textbf{0.021} \\
                                  & LPIPS & 0.004 & 0.401 \\
      \cmidrule{1-4}
      \multirow{4}{*}{Batch Size} & FID & \textbf{-13.509} & \textbf{0.001} \\
                                  & {${\text{FD}}_\textrm{DINO} $} & -46.811 & 0.468 \\
                                  & IS* & \textbf{-0.266} & \textbf{0.011} \\
                                  & LPIPS & \textbf{0.010} & \textbf{0.016} \\
      \cmidrule{1-4}
      \multirow{4}{*}{Interaction} & FID & 4.053 & 0.463 \\
                                    & {${\text{FD}}_\textrm{DINO} $} & -55.720 & 0.222 \\
                                    & IS* & 0.181 & 0.221 \\
                                    & LPIPS & \textbf{0.016} & \textbf{0.008} \\
      \bottomrule
    \end{tabular}
    }
    \vspace{-1em}
\end{wrapfigure}

\textbf{PCPO Mitigates Model Collapse.} The stability from PCPO translates
to significant improvements in fidelity and diversity.
Specifically, for the DDPO experiments, PCPO achieves a statistically significant improvement
in sample fidelity (FID). While $\text{FD}_\text{DINO}$ showed similar trends, the effect was not
statistically significant ($p = 0.247$). We attribute this lack of significance
to limitations of the base model or task setup, especially since the
$\text{FD}_\text{DINO}$ metric also failed to register a significant effect
for batch size ($p = 0.468$). This suggests the metric may be insensitive to
improvements within this specific experimental context, possibly due to simplicity
of prompts or limitations of the base model's capacity.

The effect on LPIPS diversity is more nuanced: while the main effect of PCPO alone
was not statistically significant, the LMM analysis reveals a strong, positive
interaction between the algorithm and batch size (\(\beta_\text{int}= 0.016, p = 0.008\)).
This indicates PCPO's benefit to diversity becomes prominent when
synergizing with larger batch sizes, which on their own were found
to significantly increase diversity (\(\beta_\text{batch}= 0.010, p = 0.016\)).

\begin{wrapfigure}{R}{0.5\textwidth}
  \vspace{-1em}
    \captionof{table}{LMM analysis for DanceGRPO (HPS).}
        \label{tab:dancegrpo-lmm-analysis}
        \centering
        \resizebox{0.9\linewidth}{!}{
        \small 
        \begin{tabular}{l lrr}
            \toprule
            \textbf{Model} & \textbf{Metric} & \textbf{Coeff. ($\beta$)} & \textbf{p-value} \\
            \midrule
            \multirow{4}{*}{SD1.4} & FID ($\downarrow$) & \textbf{-6.730} & \textbf{0.026} \\
                                   & {${\text{FD}}_\textrm{DINO} $} ($\downarrow$) & -64.482 & 0.128 \\
                                   & IS* ($\downarrow$) & -0.029 & 0.499 \\
                                   & LPIPS ($\uparrow$) & -0.005 & 0.102 \\
            \cmidrule{1-4}
            \multirow{4}{*}{FLUX}  & FID ($\downarrow$) & \textbf{-13.884} & \textbf{0.001} \\
                                    & {${\text{FD}}_\textrm{DINO} $} ($\downarrow$) & \textbf{-175.592} & \textbf{0.001} \\
                                   & IS* ($\downarrow$) & \textbf{-0.184} & \textbf{0.014} \\
                                   & LPIPS ($\uparrow$) & -0.002 & 0.629 \\
            \bottomrule
        \end{tabular}
    }
    \vspace{-1em}
\end{wrapfigure}

The most compelling evidence for PCPO's role in mitigating model collapse,
however, comes from the Inception Score (IS) analysis. We acknowledge the
common interpretation that, given similar FID, a higher IS can indicate
better quality. However, this metric's behavior can be task-dependent,
and it is known that non-diverse models can ``artificially achieve high
IS''~\citep{sadat2024cads}.
To determine the correct interpretation for our specific task, we
first established an empirical ground truth. We found that increasing the
batch size---a known technique to reduce model collapse by preserving data
diversity~\citep{Shumailov2024}---causes a statistically significant
\textit{decrease} in IS ($\beta_{batch} = -0.266, p=0.011$).

This finding strongly suggests that, in this context, a high IS
is not an indicator of quality but rather a pathological artifact of mode
collapse, rewarding low-diversity, high-confidence outputs. We therefore
treat IS for our task as a metric to be minimized, given lower or comparable FID.
With this understanding, PCPO's statistically significant reduction in IS
($\beta_{alg} = -0.241, p=0.021$) is not a sign of degradation, but
strong evidence that it achieves a desirable stabilizing effect, similar
to using a larger batch.

PCPO's fidelity improvements (FID, {$\text{FD}_\text{DINO}$}) also
hold in the DanceGRPO experiments.
This is particularly notable because the comparison was handicapped;
due to noisy reward trajectories, we evaluated PCPO at a higher-reward
checkpoint than the baseline, a state that typically harms FID.
Nonetheless, PCPO still delivered significantly better FID for both models.
This result, along with a lower IS for the FLUX model, reinforces that
PCPO effectively mitigates model collapse.
Conversely, no significant effect on LPIPS was observed for the DanceGRPO experiments.
We hypothesize this is due to two confounding factors: the batch sizes may have
been too small to activate the synergistic effect, and the comparison was made
at a checkpoint where PCPO had already achieved a higher reward,
potentially masking underlying diversity improvements.

Qualitatively, PCPO avoids the severe model collapse that affects the
baselines, producing clear and diverse images instead of the blurry, repetitive
outputs characteristic of unstable training (Figure~\ref{fig:image_grid}).
Further comparisons showing PCPO's superior visual fidelity throughout training
are available in Appendix~\ref{sec:more-qualitative-results}.
To validate these gains with human perception, we conducted a formal preference
study. To ensure a fair comparison, we evaluated PCPO (epoch 120) against the
baseline at two reward-bracketing checkpoints (epochs 180 and 240). The results
were decisive: human evaluators robustly preferred PCPO's outputs over the
DanceGRPO baseline in all categories (Figure~\ref{fig:human-eval}). This preference
was strong even on mobile devices where the baseline's subtle artifacts
(Figure~\ref{fig:progress-4}) were less visible, suggesting the true performance
gap may be even wider. 

\begin{figure}[!t]
  \vspace{-1em}
  \centering
  \includegraphics[width=\linewidth]{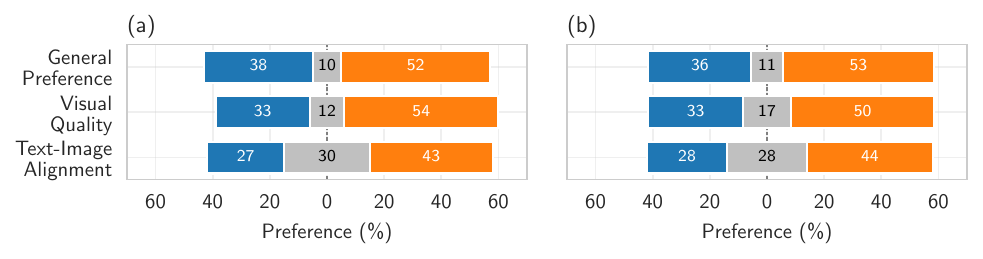}
  \vspace{-1.5em}
  \caption{Human evaluation results for DanceGRPO (\textcolor{blue}{blue}) vs.
  PCPO (\textcolor{orange}{orange}) on FLUX. To ensure a fair comparison that
  accounts for faster convergence, our model (120 epochs) was
  evaluated against the baseline at two reward-bracketing checkpoints: (a) 180
  epochs and (b) 240 epochs.}
  \label{fig:human-eval}
  \vspace{-0.5em}
\end{figure}

\noindent\textbf{Evaluation with Unseen Prompts on Diverse Reward Metrics.}
We evaluated checkpoints at matched reward levels from our HPSv2.1-trained DanceGRPO
experiments. We test generalization on simpler prompts from the MSCOCO-2017 validation set
and more complex prompts from the MJHQ-30K dataset.
The results, presented in Table~\ref{tab:preference-metrics-combined}(a, b),
demonstrate that PCPO not only excels on the metric it was trained on (HPSv2.1) but also
outperforms the baseline across a wide suite of metrics.
For SD1.4, PCPO outperforms DanceGRPO across all metrics at an equivalent training point (200 epochs).
For the FLUX model, PCPO at 120 epochs outperforms the DanceGRPO baseline at 180 epochs.
This confirms that PCPO converges substantially faster to a more generalizable policy.
Notably, PCPO maintains text-image alignment (CLIPScore) better than the baseline,
suggesting it is less prone to reward hacking.

\begin{table}[!t]
    \caption{Human preference alignment metrics on 5K unseen prompts
    from (a) MSCOCO-2017 Validation set, (b) MJHQ-30K.
    PCPO outperforms DanceGRPO on the trained reward (HPSv2.1) and unseen
    metrics, even at an earlier epoch (FLUX), confirming less reward hacking.}
    \label{tab:preference-metrics-combined}
    \centering
    \resizebox{0.9\linewidth}{!}{
    \small
    \begin{tabular}{ll ccccc}
        \toprule
        \textbf{Model} & \textbf{Method (Epoch)} & \textbf{HPSv2.1} & \textbf{Aesthetic} & \textbf{CLIPScore} & \textbf{PickScore} & \textbf{ImgRwd} \\
        \midrule

        % --- (a) ---
        \multicolumn{7}{c}{\textbf{(a) MSCOCO-2017 Val}} \\
        \midrule
        \multirow{3}{*}{SD1.4} & {Base Model} & 0.252 & 5.18 & \textbf{0.365} & 21.53 & 0.17 \\
                               & {DanceGRPO (E200)} & 0.317 & \textbf{5.69} & 0.357 & 22.13 & 0.87 \\
                               & {PCPO (E200)}      & \textbf{0.326} & \textbf{5.69} & \textbf{0.365} & \textbf{22.26} & \textbf{0.90} \\
        \cmidrule{2-7}
        \multirow{3}{*}{FLUX}  & {Base Model} & 0.293 & 5.70 & \textbf{0.382} & 22.94 & 0.95 \\
                               & {DanceGRPO (E180)} & 0.330 & 6.15 & 0.373 & 23.08 & 1.07 \\
                               & {PCPO (E120)}      & \textbf{0.337} & \textbf{6.21} & {0.369} & \textbf{23.12} & \textbf{1.14} \\
        \midrule
        \midrule
        % --- (b) ---
        \multicolumn{7}{c}{\textbf{(b) MJHQ}} \\
        \midrule
        \multirow{3}{*}{SD1.4} & {Base Model} & {0.247} & {5.56} & {0.370} & {19.84} & {0.17} \\
                               & {DanceGRPO (E200)} & 0.347 & {6.14} & 0.374 & 20.95 & 1.12 \\
                               & {PCPO (E200)}      & \textbf{0.353} & \textbf{6.18} & \textbf{0.379} & \textbf{21.14} & \textbf{1.16} \\
        \cmidrule{2-7}
        \multirow{3}{*}{FLUX}  & {Base Model} & 0.306 & 6.37 & \textbf{0.398} & 21.96 & 1.14 \\
                               & {DanceGRPO (E180)} & 0.345 & \textbf{6.56} & 0.387 & 22.14 & 1.27 \\
                               & {PCPO (E120)}      & \textbf{0.350} & \textbf{6.56} & \textbf{0.398} & \textbf{22.28} & \textbf{1.32} \\
        \bottomrule
    \end{tabular}
    }
    \vspace{-1em}
\end{table}

\noindent\textbf{Generalization.}
To test generalizability, we applied PCPO in a significantly different
training regime using the SD3.5-M model with the Flow-GRPO framework. This
setup involved a distinct noise schedule, different rewards (OCR, PickScore), an auxiliary
KL penalty, and a separate prompt dataset (see Appendix~\ref{app:exp-details}).
Despite these substantial changes, PCPO's benefits remained clear.
As shown in Figure~\ref{fig:sd3_flowgrpo_results}, PCPO consistently
outperformed the baseline, achieving a higher reward and lower (better) KL divergence,
while maintaining significantly less clipping. This confirms
PCPO's stabilizing properties are robust across different models, noise schedules, and
training configurations.

\begin{figure}[!tb]
    \centering
    \vspace{-1em}
    \includegraphics[width=\linewidth]{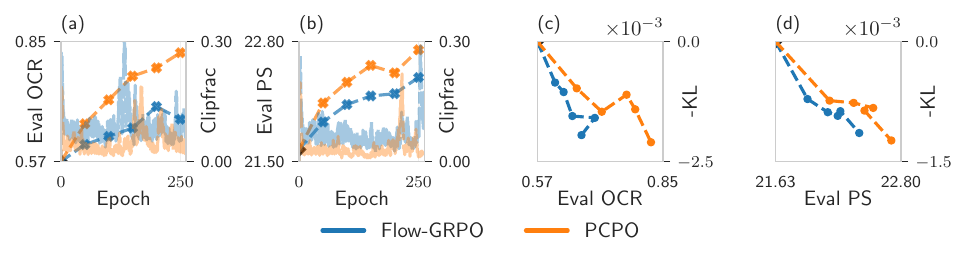}
    \vspace{-2em}
    \caption{PCPO demonstrates robust generalizability on SD3.5-M using the Flow-GRPO
    framework. PCPO \textcolor{orange}{(orange)} consistently outperforms the baseline (blue),
    achieving a higher reward, lower KL divergence, and a lower clipping fraction.
    Plots explained in depth in Appendix~\ref{app:plotting-details}.}
    \label{fig:sd3_flowgrpo_results}
    \vspace{-1em}
\end{figure}

\subsection{Ablation Study}
\label{sec:ablations}

\noindent\textbf{PCPO Component Analysis.}
Our ablation study dissects the contribution of each PCPO component by
sequentially adding them to the DDPO (Aesthetics) baseline.
Figure~\ref{fig:ablations-aes} shows this progression---from DDPO (\textcolor{blue}{blue}),
through the \(\log\rho_t\) objective (\textcolor{ForestGreen}{green}) and \(\varepsilon\)-matching (\textcolor{red}{red}),
to our final proportional weighting (\textcolor{orange}{orange})---yields a
steady reduction in clipping. While each component
contributes to stability, only the full PCPO framework achieves a zero on-policy
clipping ratio, a theoretical result confirming its superior design. This final
step of adding proportional weighting was also the most critical factor for
accelerating training.

\begin{wrapfigure}{R}{0.5\linewidth}
  \vspace{-2em}
  \centering
  \includegraphics[width=\linewidth]{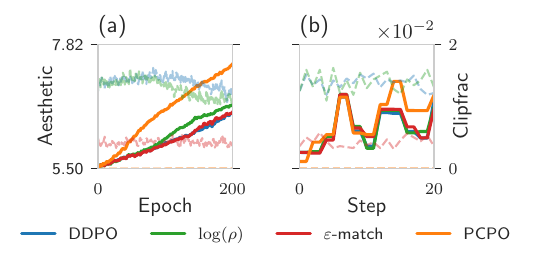}
  \vspace{-2em}
  \caption{Ablation study of sequentially adding PCPO's components. Results show
  rewards and clipping fractions over (a) 200 epochs, (b) a close-up of the initial 20 steps
  (unsmoothed).}
  \label{fig:ablations-aes}
  \vspace{-1em}
\end{wrapfigure}

\noindent\textbf{PCPO vs. Heuristic Acceleration Methods.}
\label{sec:heuristic-comparison}
To validate our principled approach, we contrast PCPO against two
heuristic acceleration strategies. First, we compare against \textit{timestep subsampling},
a heuristic where policy updates use only 50\% of timesteps. This reduces
wall-clock time per epoch to just 58\% of the standard training, but as
Figure~\ref{fig:naive-acceleration}(b) shows, the speedup comes at a significant cost
to final image quality.

Second, we contrast our proportionality principle with a heuristic inspired by
the empirical observations of concurrent
work~\citep{he2025tempflowgrpotimingmattersgrpo}: prioritizing high-noise
timesteps due to their higher reward variance. The DanceGRPO SDE provides a
decisive test case, as its native weights (\(\appropto \sqrt{\Delta t_i}\))
already disproportionately favor the short, high-noise integration steps, unlike
the Flow-GRPO SDE used by TempFlow-GRPO (Figure~\ref{fig:w-and-sigma}(c, d)). The
empirical heuristic would suggest amplifying these weights even further. In
direct opposition, our proportionality principle requires reweighting in the
opposite direction. Our main experiments confirm that applying our principle
remarkably accelerates training. Conversely, further emphasizing high-noise
steps with a uniform weighting schedule ultimately harms performance compared to
the baseline (Figure~\ref{fig:naive-acceleration}(d)). Together, these results
highlight the advantage of our principled approach: PCPO consistently achieves
significant training acceleration \textit{without} the degradation in sample quality
that often accompanies simple heuristic methods.

\begin{figure}[!h]
  \centering
  \includegraphics[width=\linewidth]{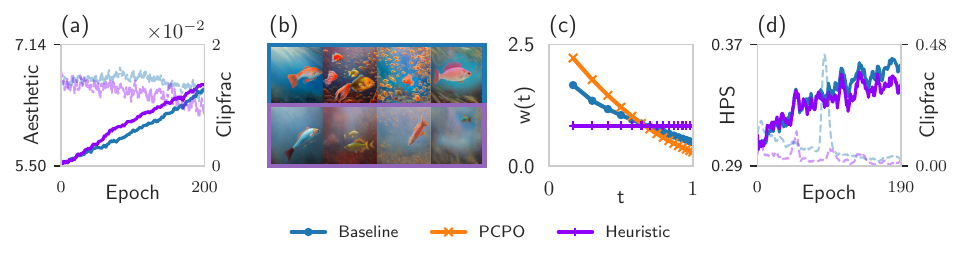}
  \caption{
  (a) The timestep subsampling method (\textcolor{Fuchsia}{purple}) yields similar
  reward gain per epoch, which translates to a significant reduction in wall-clock
  time per reward gain (58\% of the DDPO baseline, \textcolor{blue}{blue}).
  (b) But this speedup degrades sample quality and diversity, shown here for the
  prompt ``fish'' using 4 consecutive seeds at a matched
  reward level (Aesthetics = 6.90). Top row: DDPO, bottom row: DDPO + timestep
  subsampling. (c) \textcolor{Fuchsia}{Purple}: A uniform reweighting
  scheme that emphasizes high-noise ($t \approx 1$) timesteps. (d) The uniform scheme
  shows a marginal initial speedup but is ultimately outperformed by the
  vanilla \(\propto \sqrt{\Delta t_i}\) schedule of DanceGRPO (\textcolor{blue}{blue}).
  }
  \label{fig:naive-acceleration}
  \vspace{-1em}
\end{figure}

\section{Discussion}

Our results strongly suggest that PCPO effectively mitigates model collapse. This
phenomenon, where models degrade when trained on recursively generated data,
arises from several error sources accumulating over time. The primary
driver is the \emph{statistical approximation error} from using finite
training samples, which causes a progressive loss of the data distribution's
tails~\citep{Shumailov2024}. Standard policy gradient methods aggravate this
issue by aggressively clipping and discarding these crucial tail-end samples.
While the most direct method to reduce this specific error is to increase the
batch size, this approach incurs significant computational overhead and does not
address other contributing factors, such as numerical precision errors,
which are also implicated in model collapse.

PCPO offers a more comprehensive and efficient solution by targeting multiple
error sources. It directly counters the statistical error by preserving more
tail-end data through drastically reduced clipping. Simultaneously, its
\(\log \rho\) formulation addresses the secondary issue of numerical precision
errors. Our LMM analysis confirms that the combined result of these targeted
corrections is a stabilizing effect paralleling that of doubling the batch size
(Table~\ref{tab:ddpo-lmm-analysis}). PCPO therefore provides the
benefits of larger batch training without the associated computational overhead,
offering an efficient, multi-faceted defense against model collapse.

\vspace{-0.25em}
\section{Conclusion and Future Work}

In this work, we addressed the critical instability in policy-gradient based alignment
for T2I models, identifying the root cause as \emph{disproportionate
credit assignment}. Our proposed framework, \emph{Proportionate Credit Policy
Optimization (PCPO)}, corrects this fundamental flaw by ensuring the feedback
signal from each timestep is proportional to its contribution. This
principled correction was shown to accelerate convergence and improve sample
quality, achieving state-of-the-art performance that surpasses strong baselines,
including DanceGRPO.

This work's focus on providing a stable, proportional feedback signal for
alignment opens several exciting avenues for future research. A promising
direction is to explore the synergy between PCPO's credit proportionality and
other stabilization techniques, such as dynamic clipping, temporal localization,
and KL regularization. A particularly interesting avenue relates to
\emph{gradient clipping}. While PCPO reduces the need for PPO and GRPO's internal
\emph{timestep clipping}, the training of flow models often still relies on
downstream \emph{gradient clipping} to prevent divergence. Interestingly,
aggressive gradient clipping has been observed to destabilize training
\citep{github_issue}. Therefore, a deeper investigation into the underlying mechanics that
produce these large, unstable gradients presents a fruitful research direction,
potentially leading to alignment methods that are stable by design.

\noindent\textbf{Reproducibility Statement.} For every proposition and mathematical
derivation, we provide proofs in Appendix~\ref{app:proofs}. Hyperparameters and
resources required to reproduce experiments are stated in Appendix~\ref{app:exp-details}.
Code is available at \url{https://github.com/jaylee2000/pcpo/}.

\noindent\textbf{Acknowledgement.}
This research was supported by the AI Computing Infrastructure Enhancement (GPU Rental Support) User Support Program funded by the Ministry of Science and ICT (MSIT), Republic of Korea (RQT-25-120217).
This work was supported by the National Research Foundation of Korea under Grant RS-2024-00336454.
This work was supported by the Institute of Information \& Communications Technology Planning \& Evaluation(IITP) grant funded by the Korea government(MSIT) (RS-2025-02304967, AI Star Fellowship(KAIST)).
This work was supported by Institute for Information \& communications Technology Planning \& Evaluation(IITP) grant funded by the Korea government(MSIT) (RS-2019-II190075, Artificial Intelligence Graduate School Program(KAIST)).

\newpage
\bibliography{iclr2026_conference}
\bibliographystyle{iclr2026_conference}

\newpage
\appendix
\section{Derivations for PCPO Objective}
\label{app:proofs}

\subsection{Proof of Proposition~\ref{prop:epsilonmatching}}
\label{sec:proof-prop1}
\epsilonmatching*
\begin{proof}
\begin{align*}
  \log \rho_t
  &= \log \frac{\mathcal{N}\!\Bigl(
      \textbf{x}_{t-1};
      \hat{\pmb\mu}_\theta^{(t-1)}(x_t),
      \sigma_t^2 \textbf{I}
    \Bigr)}
    {\mathcal{N}\!\Bigl(
      \textbf{x}_{t-1};
      \hat{\pmb\mu}_\text{old}^{(t-1)}(x_t),
      \sigma_t^2 \textbf{I}
    \Bigr)} \\[2pt]
  &= -\frac{1}{2 \sigma_t^2}
    \Bigl(
      \big\|\textbf{x}_{t-1} - \hat{\pmb\mu}_\theta^{(t-1)}(x_t) \big\|^{2}
      -\big\|\textbf{x}_{t-1} - \hat{\pmb\mu}_\text{old}^{(t-1)}(x_t)
      \big\|^{2}
    \Bigr)
\end{align*}
where
\begin{align*}
  \hat{\pmb\mu}_\theta^{(t-1)}(\textbf{x}_t)
    := \dfrac{\textbf{x}_t - \sqrt{1 - \bar{\alpha_t}}\hat{\pmb\varepsilon}_\theta^{(t)}(\textbf{x}_t)}
    {\sqrt{\alpha_t}} +
    \sqrt{1 - \bar{\alpha}_{t-1} - \sigma_t^2} \hat{\pmb\varepsilon}_\theta^{(t)}(\textbf{x}_t).
\end{align*}
and \(\textbf{x}_{t-1} = \hat{\pmb\mu}_\text{old}^{(t-1)}(\textbf{x}_t) + \sigma_t \pmb\epsilon_{\text{old}}^{(t)} \sim \pi_\text{old}\).
Substituting this into the equation above yields
\begin{align*}
  &-\frac{1}{2 \sigma_t^2}
    \Bigl(
      \big\|\textbf{x}_{t-1} - \hat{\pmb\mu}_\theta^{(t-1)}(\textbf{x}_t) \big\|^{2}
      -\big\|\textbf{x}_{t-1} - \hat{\pmb\mu}_\text{old}^{(t-1)}(\textbf{x}_t)
      \big\|^{2}
    \Bigr) \\[2pt]
  &= -\frac{1}{2 \sigma_t^2} \Bigl[
    \|\hat{\pmb\mu}_\text{old}^{(t-1)}(\textbf{x}_t)-\hat{\pmb\mu}_\theta^{(t-1)}(\textbf{x}_t)
    + \sigma_t \pmb\epsilon_{\text{old}}^{(t)}\|^2- \|\sigma_t \pmb\epsilon_{\text{old}}^{(t)} \|^2 \Bigr] \\[2pt]
  &= -\frac{1}{2 \sigma_t^2} \Bigl[
    \| C(t)(\hat{\pmb\varepsilon}_\theta^{(t)}(\textbf{x}_t)-\hat{\pmb\varepsilon}_\text{old}^{(t)}(\textbf{x}_t)) + \sigma_t \pmb\epsilon_{\text{old}}^{(t)}\|^2
    - \|\sigma_t \pmb\epsilon_{\text{old}}^{(t)} \|^2 \Bigr] \\[2pt]
  &= -\frac{1}{2} \Bigl[
    \| w(t)(\hat{\pmb\varepsilon}_\theta^{(t)}(\textbf{x}_t)-\hat{\pmb\varepsilon}_\text{old}^{(t)}(\textbf{x}_t)) + \pmb\epsilon_{\text{old}}^{(t)}\|^2
    - \|\pmb\epsilon_{\text{old}}^{(t)} \|^2 \Bigr] \\[2pt]
  &= - \Bigl[
    w(t)(\hat{\pmb\varepsilon}_\theta^{(t)}(\textbf{x}_t)-\hat{\pmb\varepsilon}_\text{old}^{(t)}(\textbf{x}_t)) \cdot \pmb\epsilon_{\text{old}}^{(t)}
    + \frac{1}{2} \| w(t)(\hat{\pmb\varepsilon}_\theta^{(t)}(\textbf{x}_t)-\hat{\pmb\varepsilon}_\text{old}^{(t)}(\textbf{x}_t))\|^2
    \Bigr].
\end{align*}
\end{proof}

\subsection{Proof of \eqref{eq:logrho-flow}}
\label{app:flow-derivation}

Here, we provide a detailed derivation for the log policy ratio in the flow model
setting, analogous to the proof for the diffusion model case.

We begin with the definition of the log policy ratio for a single timestep \(t_i\),
which is the log-likelihood ratio of the one-step transition probabilities.
Both \(p_\theta\) and \(p_\text{old}\) define a Gaussian distribution for the next
state \(x_{t_i-\Delta t_i}\) given the current state \(x_{t_i}\).
\begin{align*}
\log \rho_{t_i} = \log \frac{p_\theta(x_{t_i-\Delta t_i} \mid x_{t_i})}
{p_\text{old}(x_{t_i-\Delta t_i} \mid x_{t_i})}
\end{align*}
The SDE in \eqref{eq:flow-sde} implies that the one-step transition is a sample
from \(\mathcal{N}(\hat{\pmb\mu}_\theta^{(t_{i-1})}(\mathbf{x}_{t_i}), \pmb\Sigma_{t_i})\), where the variance is
\(\pmb\Sigma_{t_i} = \sigma_{t_i}^2 (\Delta t_i) \mathbf{I}\) and the mean is
\(\hat{\pmb\mu}_\theta^{(t_{i-1})}(\textbf{x}_{t_i}) = \textbf{x}_{t_i} - \mathbf{u}_\theta \Delta t_i + \frac{\sigma_{t_i}^2}{2}
\mathbf{s}_\theta \Delta t_i\), where
\begin{equation}
\label{eq:score-function}
\mathbf{s}_\theta = -\frac{\textbf{x}_{t_i} - (1 - t_i) \hat{\mathbf{x}}_0}{{t_i}^2},
\qquad \hat{\mathbf{x}}_0 = \textbf{x}_{t_i} - \mathbf{u}_\theta t_i.
\end{equation}
Using the formula for the log-ratio of two Gaussians with the same variance, we get:
\begin{equation}
\log \rho_{t_i} = -\frac{1}{2(\sigma_{t_i} \sqrt{\Delta t_i})^2}
\left( \big\|\mathbf{x}_{t_i-\Delta t_i} - \hat{\pmb\mu}_\theta^{(t_{i-1})}(\mathbf{x}_{t_i}) \big\|^{2} -
\big\|\textbf{x}_{t_i-\Delta t_i} - \hat{\pmb\mu}_\text{old}^{(t_{i-1})}(\mathbf{x}_{t_i}) \big\|^{2} \right)
\end{equation}
The next state \(\mathbf{x}_{t_i-\Delta t_i}\) is sampled from the old policy, so
\(\mathbf{x}_{t_i-\Delta t_i} = \hat{\pmb\mu}_\text{old}^{(t_{i-1})}(\mathbf{x}_{t_i}) + \sigma_{t_i} \sqrt{\Delta t_i} \,
\pmb\epsilon_\text{old}^{(t_i)}\).
Substituting this into the equation yields:
\begin{equation}
\log \rho_{t_i} = -\frac{1}{2\sigma_{t_i}^2 \Delta t_i}
\left( \big\|\ \hat{\pmb\mu}_\text{old}^{(t_{i-1})}(\mathbf{x}_{t_i}) - \hat{\pmb\mu}_\theta^{(t_{i-1})}(\mathbf{x}_{t_i}) +
\sigma_{t_i} \sqrt{\Delta t_i} \, \pmb\epsilon_\text{old}^{(t_i)} \big\|^{2} -
\big\| \sigma_{t_i} \sqrt{\Delta t_i} \, \pmb\epsilon_\text{old}^{(t_i)} \big\|^{2} \right)
\end{equation}
Substituting \eqref{eq:score-function} into the mean difference, we have:
\begin{equation}
\hat{\pmb\mu}_\text{old}^{(t_{i-1})}(\mathbf{x}_{t_i}) - \hat{\pmb\mu}_\theta^{(t_{i-1})}(\mathbf{x}_{t_i}) =
\bigl( \mathbf{u}_\theta(x_{t_i}, t_i) -
\mathbf{u}_\text{old}(x_{t_i}, t_i)\bigr) \Delta t_i(1 + \frac{(1-t_i)\sigma_{t_i}^2}{2t_i})
\end{equation}
Substituting this back and expanding the squared norm, we get:
\begin{align*}
\log \rho_{t_i} &= -\frac{1}{2\sigma_{t_i}^2 \Delta t_i} \left( \big\|
(\mathbf{u}_\theta - \mathbf{u}_\text{old}) \Delta t_i (1+ \frac{(1-t_i)\sigma_{t_i}^2}{2t_i})+ \sigma_{t_i} \sqrt{\Delta t_i}
\, \pmb{\epsilon}_\text{old}^{(t_i)} \big\|^{2} - \| \sigma_{t_i} \sqrt{\Delta t_i}
\pmb{\epsilon}_\text{old}^{(t_i)}\|^2 \right) \\
&= -\frac{1}{2\sigma_{t_i}^2 \Delta t_i} \Biggl( \left\|(\mathbf{u}_\theta -
\mathbf{u}_\text{old})\Delta t_i\left(1 + \frac{(1-t_i)\sigma_{t_i}^2}{2t_i}\right)\right\|^2 \\
& \phantom{=} \qquad \qquad \quad + 2 (\mathbf{u}_\theta - \mathbf{u}_\text{old})
\Delta t_i \left(1+ \frac{(1-t_i)\sigma_{t_i}^2}{2t_i}\right)\cdot \left(\sigma_{t_i} \sqrt{\Delta t_i} \, \pmb{\epsilon}_\text{old}^{(t_i)}\right) \Biggr) \\
&= - \frac{\Delta t_i}{2\sigma_{t_i}^2} (1 + \frac{(1-t_i)\sigma_{t_i}^2}{2t_i})^2 \|\mathbf{u}_\theta -
\mathbf{u}_\text{old}\|^2 - \frac{\sqrt{\Delta t_i}}{\sigma_{t_i}} (1 + \frac{(1-t_i)\sigma_{t_i}^2}{2t_i})
(\mathbf{u}_\theta - \mathbf{u}_\text{old}) \cdot \pmb{\epsilon}_\text{old}^{(t_i)}
\end{align*}
By defining the weight \(w(t_i) = \frac{\sqrt{\Delta t_i}}{\sigma_{t_i}}(1 + \frac{(1-t_i)\sigma_{t_i}^2}{2t_i})\), we can rewrite
the expression in a form analogous to the diffusion case:
\begin{equation}
\log \rho_{t_i} = - \Bigl[w(t_i) (\mathbf{u}_\theta - \mathbf{u}_\text{old})
\cdot \pmb{\epsilon}_\text{old}^{(t_i)} +\frac{1}{2} \| w(t_i)(\mathbf{u}_\theta -
\mathbf{u}_\text{old})\|^2
\Bigr].
\end{equation}
This matches the structure of \eqref{eq:logrho-flow}, confirming the derivation.

\subsection{Proof of Proposition \ref{prop:flowmatching}}
\label{app:optimalw}

\flowmatching*
\begin{proof}
To derive \(w(t_i) \propto \Delta t_i\) with the same mean weight as \(w_\text{orig}(t_i)\),
we define normalizing coefficient \(\zeta\) such that:
\[ \sum_{i=1}^N w_\text{orig}(t_i) = \sum_{i=1}^N w(t_i),\]
where
\begin{align*}
  w_\text{orig}(t_i) = \frac{\sqrt{\Delta t_i}}{\sigma_{t_i}}(1+\frac{(1-t_i)\sigma_{t_i}^2}{2t_i}),
  \quad 
  w(t_i) = \zeta \Delta t_i.
\end{align*}
Substituting this into the equation above yields \(\zeta = \sum_{i=1}^N w_\text{orig}(t_i)
=\sum_{i=1}^N \frac{\sqrt{\Delta t_i}}{\sigma_{t_i}}(1+\frac{(1-t_i)\sigma_{t_i}^2}{2t_i})\).
\end{proof}

\section{Experiment Details}
\label{app:exp-details}

This section outlines the setup for all experiments, including training
configurations, evaluation metrics, and computational resources.

\subsection{Training Setups}
\label{sec:training-setups}

We test PCPO across ten distinct training configurations, spanning three
baseline algorithms, four base models, five reward functions, and five prompt datasets.

\begin{table}[!t]
    \centering
    \caption{Index of Training Configurations. This table links the
    configuration index (from Table \ref{tab:master-hyperparams}) to its 
    results in the paper. Main configurations in \textcolor{red}{\textbf{bold}}.}
    \label{tab:config-index}
    
    \begin{tabularx}{\textwidth}{ c p{4.5cm} >{\raggedright\arraybackslash}X >{\raggedright\arraybackslash}X }
        \toprule
        \textbf{Config} & \textbf{Brief Description} & \textbf{Figures} & \textbf{Tables} \\
        \midrule
        
        1 & DDPO Aes, Half Batch Size & \ref{fig:ablations-aes}, \ref{fig:naive-acceleration}(a, b), \ref{fig:reward-traj-comparison-2}(a) & \ref{tab:raw-scores-merged}(a), \ref{tab:ddpo-lmm-analysis}, \ref{tab:full-speedup-results}\\
        \addlinespace % Adds a little space between rows
        
        2 & \textcolor{red}{\textbf{DDPO Aes}} & \ref{fig:image_grid}(a), \ref{fig:reward-traj-comparison}(a), \ref{fig:rho_plots}(a), \ref{fig:grad_plots_placeholder}(a), \ref{fig:multi-IRG}, \ref{fig:comp-consecutive-seed}, \ref{fig:comp-consecutive-seed2}, \ref{fig:progress-1}($\downarrow$) & \ref{tab:efficiency-results}, \ref{tab:raw-scores-merged}(a), \ref{tab:ddpo-lmm-analysis}, \ref{tab:full-speedup-results} \\
        \addlinespace
        
        3 & DDPO BERT, Half Batch Size & \ref{fig:reward-traj-comparison-2}(b) & \ref{tab:raw-scores-merged}(a), \ref{tab:ddpo-lmm-analysis}, \ref{tab:full-speedup-results} \\
        \addlinespace
        
        4 & \textcolor{red}{\textbf{DDPO BERT}} & \ref{fig:reward-traj-comparison}(b), \ref{fig:multi-IRG}, \ref{fig:progress-1}($\uparrow$) & \ref{tab:efficiency-results}, \ref{tab:raw-scores-merged}(a), \ref{tab:ddpo-lmm-analysis}, \ref{tab:full-speedup-results}\\
        \addlinespace
        
        5 & \textcolor{red}{\textbf{DanceGRPO SD1.4}} & \ref{fig:reward-traj-comparison}(c), \ref{fig:IRG}($\uparrow$), \ref{fig:progress-2} & \ref{tab:efficiency-results}, \ref{tab:raw-scores-merged}(b), \ref{tab:dancegrpo-lmm-analysis}, \ref{tab:preference-metrics-combined}, \ref{tab:full-speedup-results}\\
        \addlinespace
        
        6 & \textcolor{red}{\textbf{DanceGRPO FLUX}} & \ref{fig:image_grid}(b), \ref{fig:reward-traj-comparison}(d), \ref{fig:human-eval}, \ref{fig:wall_clock_placeholder}(b), \ref{fig:rho_plots}(b), \ref{fig:grad_plots_placeholder}(b), \ref{fig:IRG}($\downarrow$), \ref{fig:progress-3}, \ref{fig:progress-4} & \ref{tab:efficiency-results}, \ref{tab:raw-scores-merged}(b), \ref{tab:dancegrpo-lmm-analysis}, \ref{tab:preference-metrics-combined}, \ref{tab:full-speedup-results} \\
        \addlinespace
        
        7 & DanceGRPO, DPO Comparison & & \ref{tab:dpo-comparison} \\
        \addlinespace
        
        8 & FlowGRPO SD3.5, OCR & \ref{fig:sd3_flowgrpo_results}(a, c) & \\
        \addlinespace
        
        9 & FlowGRPO SD3.5, PickScore & \ref{fig:sd3_flowgrpo_results}(b, d) & \\
        \addlinespace
        
        10 & DanceGRPO FLUX, Naive Acc. & \ref{fig:naive-acceleration}(d) & \\
        
        \bottomrule
    \end{tabularx}
\end{table}

\begin{sidewaystable}[!htbp]
    \centering
    \caption{All 10 Training Setups. Each column (1-10) indexes a unique setup. Configurations for main experiments are in \textcolor{red}{\textbf{bold}.}}
    \label{tab:master-hyperparams}
    \small % Use \small or \footnotesize to help it fit
    
    \begin{tabular}{l *{10}{c}}
        \toprule
        \textbf{Parameter} & \textbf{1} & \textbf{2} & \textbf{3} & \textbf{4} & \textbf{5} & \textbf{6} & \textbf{7} & \textbf{8} & \textbf{9} & \textbf{10} \\
        \midrule
        
        \multicolumn{11}{l}{\textit{Framework \& Model}} \\
        Baseline algorithm & DDPO & \textcolor{red}{\textbf{DDPO}} & DDPO & \textcolor{red}{\textbf{DDPO}} & \textcolor{red}{\textbf{DanceGRPO}} & \textcolor{red}{\textbf{DanceGRPO}} & DanceGRPO & Flow-GRPO & Flow-GRPO & DanceGRPO \\
        Diffusion/Flow & D & \textcolor{red}{\textbf{D}} & D & \textcolor{red}{\textbf{D}} & \textcolor{red}{\textbf{D}} & \textcolor{red}{\textbf{F}} & D & F & F & F \\
        Base model & SD1.5 & \textcolor{red}{\textbf{SD1.5}} & SD1.5 & \textcolor{red}{\textbf{SD1.5}} & \textcolor{red}{\textbf{SD1.4}} & \textcolor{red}{\textbf{FLUX.1-dev}} & SD1.5 & SD3.5-M & SD3.5-M & FLUX.1-dev \\
        \midrule

        \multicolumn{11}{l}{\textit{Task}} \\
        Prompts & animal & \textcolor{red}{\textbf{animal}} & activity & \textcolor{red}{\textbf{activity}} & \textcolor{red}{\textbf{HPD}} & \textcolor{red}{\textbf{HPD}} & Pick & OCR & Pick-SFW & HPD \\
        Reward & Aes & \textcolor{red}{\textbf{Aes}} & BERT & \textcolor{red}{\textbf{BERT}} & \textcolor{red}{\textbf{HPS}} & \textcolor{red}{\textbf{HPS}} & Pick & OCR & Pick & HPS \\
        \midrule

        \multicolumn{11}{l}{\textit{Sampling Hyperparams}} \\
        Noise level (SDE) & & \textcolor{red}{\textbf{}} & & \textcolor{red}{\textbf{}} & \textcolor{red}{\textbf{}} & \textcolor{red}{\textbf{$ \eta = 0.3 $}} & & $ \eta = 0.7 $ & $ \eta = 0.7 $ & $ \eta = 0.3 $\\
        Timestep subsampling & & \textcolor{red}{\textbf{}} & & \textcolor{red}{\textbf{}} & \textcolor{red}{\textbf{}} & \textcolor{red}{\textbf{$ 0.6 $}} & & & & $ 0.6 $ \\
        Sampling steps & 50 & \textcolor{red}{\textbf{50}} & 50 & \textcolor{red}{\textbf{50}} & \textcolor{red}{\textbf{50}} & \textcolor{red}{\textbf{12}} & 50 & 10 & 10 & 16 \\
        \midrule

        \multicolumn{11}{l}{\textit{Training Hyperparams}} \\
        $w_{\text{CFG}}$ & 5.0 & \textcolor{red}{\textbf{5.0}} & 5.0 & \textcolor{red}{\textbf{5.0}} & \textcolor{red}{\textbf{5.0}} & \textcolor{red}{\textbf{1.0}} & 5.0 & 4.5 & 4.5 & 1.0 \\
        $\beta_{\text{KL}}$ &  & \textcolor{red}{\textbf{}} &  & \textcolor{red}{\textbf{}} & \textcolor{red}{\textbf{}} & \textcolor{red}{\textbf{}} &  & 0.04 & 0.01 &  \\
        Learning rate & 1e-4 & \textcolor{red}{\textbf{1e-4}} & 1e-4 & \textcolor{red}{\textbf{1e-4}} & \textcolor{red}{\textbf{1e-5}} & \textcolor{red}{\textbf{3e-4}} & 1e-5 & 3e-4 & 3e-4 & 1e-5 \\
        Max grad norm & 1.0 & \textcolor{red}{\textbf{1.0}} & 1.0 & \textcolor{red}{\textbf{1.0}} & \textcolor{red}{\textbf{1.0}} & \textcolor{red}{\textbf{1.0}} & 1.0 & 1.0 & 1.0 & 1.0 \\
        Prompts per iteration & & \textcolor{red}{\textbf{}} & & \textcolor{red}{\textbf{}} & \textcolor{red}{\textbf{32}} & \textcolor{red}{\textbf{32}} & 32 & 8 & 32 & 32 \\
        Images per prompt & & \textcolor{red}{\textbf{}} & & \textcolor{red}{\textbf{}} & \textcolor{red}{\textbf{16}} & \textcolor{red}{\textbf{12}} & 16 & 8 & 16 & 12 \\
        Images per iteration & 256 & \textcolor{red}{\textbf{512}} & 288 & \textcolor{red}{\textbf{576}} & \textcolor{red}{\textbf{512}} & \textcolor{red}{\textbf{384}} & 512 & 64 & 512 & 384 \\
        Gradient updates & 2 & \textcolor{red}{\textbf{2}} & 2 & \textcolor{red}{\textbf{2}} & \textcolor{red}{\textbf{2}} & \textcolor{red}{\textbf{4}} & 2 & 2 & 2 & 4 \\
        Clip range ($\xi$) & 1e-4 & \textcolor{red}{\textbf{1e-4}} & 1e-4 & \textcolor{red}{\textbf{1e-4}} & \textcolor{red}{\textbf{1e-4}} & \textcolor{red}{\textbf{1e-4}} & 1e-4 & 1e-4 & 1e-4 & 1e-4 \\
        LoRA ($r, \alpha$) & (4,4) & \textcolor{red}{\textbf{(4,4)}} & (4,4) & \textcolor{red}{\textbf{(4,4)}} & \textcolor{red}{\textbf{}} & \textcolor{red}{\textbf{(128,256)}} & & (32,64) & (32,64) & \\
        Checkpoint freq. & 1 & \textcolor{red}{\textbf{1}} & 1 & \textcolor{red}{\textbf{1}} & 50 & 60 & 50 & 20 & 20 & 60 \\

        \bottomrule
    \end{tabular}
\end{sidewaystable}

All experiments were conducted using fully online training, where each iteration
generates fresh trajectories from the current policy. We specify important
configurations and hyperparameters for each setup in Table~\ref{tab:master-hyperparams}.
Apart from the learning rate, which was lowered from $3 \times 10^{-4}$ to $1 \times 10^{-4}$,
and LoRA parameters for DDPO experiments, all hyperparameters follow 
that of the original codebases. For DDPO and Flow-GRPO, we used our
memory-efficient reimplementations, following the logic of the official codebases.

\subsubsection{Table~\ref{tab:master-hyperparams} Legend}
\noindent\textbf{Prompts.} "animal" refers to the \texttt{simple\_animals} dataset
from the DDPO codebase, consisting of 45 simple animal nouns (e.g., "cat", "dog", "fish").
"activity" refers to the \texttt{nouns\_activities} dataset from the same
codebase, consisting of 135 prompts combining 45 animal nouns with 3 activity verbs
("washing the dishes", "riding a bike", "playing chess").
"HPD" refers to the open-vocabulary prompts from the HPDv2 dataset used in
\citet{xue2025dancegrpo}. "Pick" refers to the Pick-a-Pic dataset, and "Pick-SFW"
refers to the safe-for-work (SFW) subset from the Flow-GRPO codebase.
"OCR" refers to the OCR prompts from the OCR dataset.

\noindent\textbf{Reward.} "Aes", "BERT", "HPS", "CLIP", "Pick", and "OCR"
are abbreviations for "Aesthetics", "BERTScore", "HPS-v2.1", "CLIP score",
"PickScore", and "OCR" reward models, respectively.

\noindent\textbf{Timestep subsampling.} DanceGRPO(FLUX)'s default settings
use the heuristic acceleration by subsampling 60\% of timesteps for policy updates.

\noindent\textbf{$\beta_{\text{KL}}$.} This is the coefficient for the KL penalty term
for KL-regularized policy gradient algorithms, as in Equation (9) of \citet{fan2023dpok}
or Equation (3) of \citet{liu2025flowgrpo}. An empty entry indicates no KL penalty used.

\noindent\textbf{Prompts/iteration \& Images/prompt.} These specify the number of unique prompts
sampled per training iteration, and the number of images generated per prompt.
For DDPO, \texttt{Images per iteration} random prompts are sampled each iteration,
as it does not require group-level normalization.

\noindent\textbf{LoRA.} $r$ and $\alpha$ are the rank and scaling factor
for LoRA fine-tuning, respectively. An empty entry indicates full fine-tuning.

\subsection{Experiment-Specific Rationale \& Limitations}

\subsubsection{DDPO}
\noindent\textbf{Rationale:}
This experiment replicates the original DDPO setup to establish a direct baseline.
The goal is to evaluate PCPO's impact on
stability and sample fidelity within a simple, well-understood environment.
The lightweight nature of the task allows for checkpointing every epoch, enabling
us to pinpoint the exact epoch where reward levels converge on a validation
set of 50K seed-controlled images. This facilitates a rigorous statistical
analysis using LMMs to isolate PCPO's effect from prompt-level variance.

\noindent\textbf{Limitations:}
Crucially, this setup is intentionally limited to simple prompts 
(45 animal nouns for Aesthetics, 45 animal nouns $\times$ 3 activity verbs for BERTScore).
As noted in \citet{Wallace_2024_CVPR}, the DDPO framework does not generalize well to
complex, open-vocabulary prompts. Therefore, this experiment evaluates performance
on a narrow task, not general prompt-following ability. Open-vocabulary experiments 
based on DanceGRPO and Flow-GRPO are presented later in the paper to address this.

\noindent\textbf{Miscellaneous Details:} For BERTScore, we replace the LLaVA
VLM used in the original implementation with Qwen2.5-VL-3B-Instruct~\citep{bai2025qwen25vltechnicalreport}.

\subsubsection{DanceGRPO}
\noindent\textbf{Rationale:}
This experiment evaluates PCPO in a more complex, open-vocabulary setting,
using the DanceGRPO framework~\citep{xue2025dancegrpo} and the HPDv2 prompt set.
This setup tests PCPO's ability to maintain fidelity and diversity
while optimizing for high-level human preferences (HPS).

\noindent\textbf{Limitations:}
\label{sec:dancegrpo-limitations}
The DanceGRPO framework is computationally intensive, limiting the
number of ablations and repetitions we can perform. Most ablations are
conducted on the simpler DDPO setup.
For FLUX, we performed LoRA fine-tuning on 8 GPUs, using the official scripts
provided by the authors. This differs from the 16-GPU full fine-tuning in
the original paper, and results in a slightly lower performance ceiling.
DanceGRPO's configurations for FLUX performs 4 gradient updates per iteration
and uses timestep subsampling, making the training dynamic more fragile.

\noindent\textbf{Miscellaneous Details:}
For SD, we mainly use SD1.4 following \citet{xue2025dancegrpo}.
For additional experiments, we use SD1.5 as the base model to
align with other DPO-based frameworks.
Training was terminated at 240 epochs, as further training degraded image
quality for the baseline with minimal reward gains (Figure~\ref{fig:progress-4}).
Note that the naive acceleration ablation (Figure~\ref{fig:naive-acceleration}(d))
was run using full fine-tuning on 8 H100 GPUs.

\subsubsection{Flow-GRPO}
\noindent\textbf{Rationale:}
To demonstrate maximum generalizability, this experiment applies PCPO to
a completely different regime: a different model (SD3.5-M), noise schedule (Flow-GRPO),
rewards (OCR, PickScore), and training objective (has a KL penalty).
Success here shows that PCPO is a model-agnostic principle, not a quirk of one specific setup.

\noindent\textbf{Limitations:}
Due to resource constraints, we conduct experiments on the resource-light
\texttt{pickscore\_sd3\_4gpu} and \texttt{ocr\_sd3\_1gpu} setups from the Flow-GRPO codebase,
not the full-scale setups in the original paper.

\noindent\textbf{Miscellaneous Details:}
To run code on GPUs with 24GB VRAM, we modified the prompt sampler.
In this process, we improved upon the original implementation by guaranteeing
the number of unique prompts per iteration match the configured value.
For PickScore experiments, we used the Pick-a-Pic SFW sub-dataset instead of the
Pick-a-Pic dataset used in the original paper, to avoid generating NSFW content.
KL divergence was calculated as:
\begin{equation*}
  D_\text{KL}(\pi_\theta || \pi_\text{ref})
  = \frac{\Delta t}{2} \big( \frac{\sigma_t(1-t)}{2t}
  + \frac{1}{\sigma_t} \big)^2 \| \mathbf{u}_\theta(\mathbf{x}_t, t) - \mathbf{u}_\text{ref}(\mathbf{x}_t, t) \|^2
\end{equation*}
following \citet{liu2025flowgrpo},
which avoids \texttt{exp()} operations and thus maintains numerical stability (see Appendix~\ref{app:rho_approximation}).

\subsection{Evaluation Details}
\subsubsection{Image Quality Metrics}
\label{sec:evaluation-metrics}
For a controlled evaluation on image quality metrics,
we generate a fixed set of 50,000 images (45 prompts \(\times\) 1112 seeds).
This set is produced by the original base model to serve as a reference, and by
each fine-tuned policy for comparison. We evaluate these images at two levels of granularity:
\begin{itemize}
    \item \textbf{Overall Metrics:} We compute FID, $\text{FD}_\textrm{DINO}$ and IS across the entire 50k
    generated samples to measure overall fidelity and quality.

    \item \textbf{Per-Prompt Analysis:} For a more fine-grained statistical
    analysis, we compute metrics on a per-prompt basis (i.e., on the 1112
    images generated for each prompt). We use a LMM on these per-prompt scores
    to isolate our method's true impact from prompt-level variance. Coefficients
    were estimated via Restricted Maximum Likelihood (REML). At this level, we
    also compute LPIPS Diversity~\citep{zhang2018unreasonableeffectivenessdeepfeatures}
    to measure intra-prompt variety.
\end{itemize}

It is important to note that FID and $\text{FD}_\textrm{DINO}$ scores computed on smaller sample sets (e.g.,
FD per 1.1k images) are expected to have a higher magnitude than scores
computed on the full dataset (FD-50k). Consequently, the effect size
(\(\beta_{alg}\)) estimated by the LMM, which is based on these per-prompt
scores, may appear larger in magnitude than the simple difference observed in
the overall FD-50k results.
Since FD scores are calculated against the outputs of the original base model,
they can be interpreted as a measure of distributional drift; a lower score
signifies less deviation from the original policy.
This indicates a successful mitigation of mode collapse and serves as a strong
proxy for the preservation of the base model's inherent fidelity and diversity.

\subsubsection{Alignment \& Generalization Metrics}
\label{app:alignment-metrics}
In addition to quality metrics, we evaluate on a suite of preference alignment 
metrics (HPSv2.1, CLIPScore, PickScore, ImageReward, Aesthetics). These
are used in Table~\ref{tab:preference-metrics-combined} to
measure generalization to unseen prompts and reward models.
The MJHQ-30K dataset contained 3.3K prompts that do not fit in the CLIP tokenizer ($>77$ tokens).
As such, we filtered these out and used the first 5K prompts of the remaining set for evaluation.
For Table~\ref{tab:dpo-comparison}, we use HPSv2 instead of HPSv2.1 to align with prior works.
We used the 2048 prompts from the Pick-a-Pic evaluation set provided by the Flow-GRPO codebase.
For all prompt datasets, we generated 1 seed-controlled image per prompt.

\subsubsection{Human Preference Study}
To ensure a fair comparison that accounts for PCPO's accelerated convergence,
we conducted a ``bracketing'' human preference study. Our model (epoch 120) was
compared against two baseline DanceGRPO checkpoints that bracket its reward
level: epoch 180 (lower reward) and epoch 240 (higher reward). The evaluation
used a pool of 450 unique image pairs for each comparison (45 prompts
\(\times\) 10 seeds). To eliminate bias, a web interface presented these
pairs by fully randomizing the comparison set, prompt, specific image, and
left/right display position for each question. We collected 297 responses against
epoch 180 and 332 responses against epoch 240.

\subsubsection{Plotting and Visualization Details}
\label{app:plotting-details}
All training curves and clipping fractions, unless specified otherwise, are smoothed with a moving average
window of 5 epochs to improve readability.

\textbf{Figure~\ref{fig:sd3_flowgrpo_results}.}
Subfigures (a, b) plot the validation reward, unlike other plots that show
the reward computed on training samples each iteration.
This leads to clearer trends, as the training reward is noisy due to
randomness from prompts sampled at each iteration.
Validation rewards are computed every 50 epochs on a fixed set of 1024 prompts that are
not in the training pool.

Subfigures (c, d) plot the Pareto frontier between reward and KL divergence
with respect to the reference policy (base model; not fine-tuned).
To reduce noise in the visualizations, the KL divergence is plotted using a
rolling mean of all estimated values up to the current epoch.
While this approach yields a low-variance estimate, it likely underestimates the
true KL divergence at that specific epoch.
Both the baseline and PCPO start at the top-left corner (low reward, low KL divergence),
and move towards the bottom-right corner (high reward, high KL divergence) as training progresses.

\subsection{Computational Resources}
\label{sec:computational-resources}

RL-based fine-tuning is known to be sensitive to hardware and 
implementation details. We report our resources for full reproducibility
in Table~\ref{tab:hardware}.

\begin{table}[!t]
\caption{Computational resources used for each experimental setup.}
\label{tab:hardware}
\centering
\begin{tabular}{ll}
\toprule
\textbf{Experiment} & \textbf{Hardware} \\
\midrule
DDPO (Aesthetics) & 2x NVIDIA RTX 4090 (24GB) \\
DDPO (BERTScore) & 3x NVIDIA RTX 4090 + 1x VLM GPU \\
DDPO (Ablations) & 2x NVIDIA RTX 3090 (24GB) \\
DanceGRPO (SD1.4) & 8x NVIDIA A100 (40GB) \\
{DanceGRPO (SD1.5)} & {8x NVIDIA A40 (48GB)} \\
DanceGRPO (FLUX) & 8x NVIDIA A5000 (24GB) \\
DanceGRPO (Naive Acc.) & 8x NVIDIA H100 (80GB) \\
{Flow-GRPO (OCR)} & {1x NVIDIA RTX3090} \\
{Flow-GRPO (PickScore)} & {4x NVIDIA RTX3090} \\
\bottomrule
\end{tabular}
\end{table}

\section{Comparison with DPO and Self-Play Baselines}
\label{app:dpo_comparison}

For completeness, we attempt to provide comparisons with reward-free methods.
While this attempt is meaningful, we emphasize that direct, fair comparison
between reward-based methods (such as PCPO) and reward-free methods presents
significant methodological difficulties that render a head-to-head evaluation
inconclusive.

\noindent\textbf{Difficulty 1: Establishing a Fair Evaluation Protocol.}
The core challenge lies in the choice of evaluation metric and the
fairness of access to human preference data.

\begin{itemize}
    \item \textbf{Restricting Reward Model Choice:} Limiting reward-based
    methods to a single model (e.g., PickScore, trained on a
    subset of Pick-a-Pic prompts) creates an unfair disadvantage.
    Reward-based methods inherently rely on "distilled" knowledge from
    human annotations, whereas reward-free methods learn directly from
    preference data. Moreover, it restricts reward-based methods from leveraging
    their ability to aggregate preference knowledge from multiple sources (e.g.,
    linear combinations of reward models, as explored in
    \citet{zheng2025diffusionnft,xue2025dancegrpo}).
    \item \textbf{Unrestricted Reward Model Choice:} Allowing reward-based
    methods to use a linear combination of multiple reward models or custom
    reward functions is then argued to be unfair to reward-free methods, which
    must "zero-shot" against a richer evaluation metric they were not trained
    to optimize. It also allows reward-based methods to indirectly access
    a wider set of human preference data distilled into various reward models.
\end{itemize}

Similar to prior work comparing DPO methods against policy-gradient
methods~\citep{Liang_2025_CVPR, zhang2025diffusionmodelnoiseawarelatent}, we set our experimental protocol to
place PCPO at a comparative disadvantage by using a single, fixed reward model (PickScore).

\noindent\textbf{Difficulty 2: Reproducibility and Closed-Weight Models.}
A second major hurdle is the lack of open-source checkpoints for SOTA
DPO or self-play baselines, including SSPO~\citep{zhang2025bridgingsftdpodiffusion},
SPIN-Diffusion~\citep{SPINDiffusion}, and RainbowPA~\citep{sun2025diffusionrainbowpa}.
Furthermore, while most work report similar evaluation protocols, i.e. slicing
the Pick-a-Pic dataset into training and validation sets, the exact details of
implementation differ significantly across published works.
Notably, the reported base model (SD 1.5) performance varies widely, suggesting substantial
variation in the difficulty and composition of the validation prompts used.
Moreover, some works~\citep{zhang2025bridgingsftdpodiffusion} use the richer Pick-a-Pic v2
dataset, while others apply filtering~\citep{zhang2025diffusionmodelnoiseawarelatent}.

A head-to-head comparison is thus difficult, and reported results must be
interpreted with caution. While we attempt to normalize for prompt
differences by comparing relative gains (final score minus reported
base score), this metric can still favor models that report an atypically
low base score. For instance, SSPO reports a significant gain of
0.034 in HPSv2, but its final performance is commensurate with that of other
methods, suggesting their validation set was uniquely challenging for the
base model, inflating the relative gain metric.

\noindent\textbf{Comparison Results.} Nonetheless, we present a quantitative comparison
between PCPO and leading DPO/self-play baselines in Table~\ref{tab:dpo-comparison}.
PCPO was trained using the standard DanceGRPO setup on SD1.5, using
Pick-a-Pic prompts and the PickScore reward model.
PCPO demonstrates highly competitive performance, underscoring its efficacy in
human preference alignment. While CLIPScore comparisons are omitted due to their
absence in prior literature, we note that PCPO preserves semantic fidelity,
maintaining a score of 0.387 (identical to the base model).

\begin{table}[!t]
  \vspace{-1em}
    \caption{Comparison with DPO and Self-Play baselines on SD1.5,
    evaluated on preference metrics from the Pick-a-Pic dataset.
    Scores of other works are cited from each paper.
    \(\dagger\): trained on Pick-a-Pic v2. More details in text.}
    \label{tab:dpo-comparison}
    \centering
    \resizebox{0.8\linewidth}{!}{
    \small
    \begin{tabular}{l cccc}
        \toprule
        \textbf{Method} & \textbf{HPSv2} ($\uparrow$) & \textbf{Aesthetic} ($\uparrow$) & \textbf{PickScore} ($\uparrow$) & \textbf{ImgRwd} ($\uparrow$) \\
        \midrule
        SD1.5 (Base) & 0.261 & 5.43 & 20.09 & 0.09 \\
        {PCPO (E200)} & {0.271} & {6.14} & {22.42} & {0.99} \\
        {gains (E200)} & {+0.010} & \textbf{+0.71} & \textbf{+2.33} & \textbf{+0.90} \\
        \midrule
        SD1.5 (Base) & 0.238 & 5.37 & 20.53 & -0.16 \\
        SSPO \(\dagger\) & 0.272 & 5.94 & 21.90 & 0.70 \\
        gains & \textbf{+0.034} & +0.57 & +1.37 & {+0.86} \\
        \midrule
        SD1.5 (Base) & 0.271 & 5.77 & 21.18 & 0.92 \\
        SPIN-Diffusion\_Iter3 & 0.276 & 6.25 & 22.00 & 1.12 \\
        gains & +0.005 & +0.48 & +0.82 & +0.20 \\
        \midrule
        SD1.5 (Base) & 0.265 & 5.47 & 20.56 & 0.08 \\
        LPO & 0.276 & 5.95 & 21.69 & 0.66 \\
        gains & +0.011 & +0.48 & +1.13 & +0.58 \\
        \bottomrule
    \end{tabular}
    \vspace{-1em}
    }
\end{table}

\section{Additional Results}
\label{app:additional-results}
\begin{table*}[!t]
\centering
\caption{Training efficiency comparison across all experimental settings.
We report the number of epochs for the baseline and our method (PCPO) to reach
various target reward levels. Speedup is calculated as
\((\text{Epochs}_{\text{Baseline}} / \text{Epochs}_{\text{PCPO}} - 1) \times 100\%\).
Highlighted rows correspond to the primary reward targets discussed in the main text.}
\label{tab:full-speedup-results}
\resizebox{0.77\linewidth}{!}{
\begin{tabular}{l l c rrr}
\toprule
\textbf{Framework} & \textbf{Setting} & \textbf{Reward} & \textbf{Baseline} & \textbf{PCPO} & \textbf{Speedup} \\
\midrule
\multirow{16}{*}{\makecell{DDPO \\ (SD1.5)}} & \multirow{4}{*}{\makecell{Aesthetics \\ (256/128)}} 
    & \textbf{6.90} & \textbf{260} & \textbf{131} & \textbf{98.5\%} \\
    & & 6.60 & 210 & 100 & 110.0\% \\
    & & 6.30 & 165 & 70  & 135.7\% \\
    & & 6.00 & 108 & 48  & 125.0\% \\
\cmidrule(l){2-6}
& \multirow{4}{*}{\makecell{Aesthetics \\ (512/256)}} 
    & \textbf{6.90} & \textbf{147} & \textbf{118} & \textbf{24.6\%} \\
    & & 6.60 & 107 & 89  & 20.2\% \\
    & & 6.30 & 73  & 61  & 19.7\% \\
    & & 6.00 & 47  & 44  & 6.8\% \\
\cmidrule(l){2-6}
& \multirow{4}{*}{\makecell{BERTScore \\ (288/144)}}
    & \textbf{0.52} & \textbf{246} & \textbf{198} & \textbf{24.2\%} \\
    & & 0.51 & 188 & 175 & 7.4\% \\
    & & 0.50 & 145 & 155 & -9.4\% \\
    & & 0.49 & 90  & 112 & -19.7\% \\
\cmidrule(l){2-6}
& \multirow{4}{*}{\makecell{BERTScore \\ (576/288)}}
    & \textbf{0.52} & \textbf{191} & \textbf{146} & \textbf{30.8\%} \\
    & & 0.51 & 159 & 124 & 28.2\% \\
    & & 0.50 & 125 & 89  & 40.4\% \\
    & & 0.49 & 86  & 74  & 16.2\% \\
\midrule
\multirow{4}{*}{\makecell{DanceGRPO \\ (SD1.4)}} & \multirow{4}{*}{HPSv2.1}
    & \textbf{0.37} & \textbf{236} & \textbf{188} & \textbf{25.5\%} \\
    & & 0.36 & 173 & 153 & 13.1\% \\
    & & 0.35 & 153 & 124 & 23.4\% \\
    & & 0.34 & 116 & 102 & 13.7\% \\
\midrule
\multirow{4}{*}{\makecell{DanceGRPO \\ (FLUX)}} & \multirow{4}{*}{HPSv2.1}
    & \textbf{0.36} & \textbf{209} & \textbf{148} & \textbf{41.2\%} \\
    & & 0.35 & 148 & 93  & 59.1\% \\
    & & 0.34 & 127 & 83  & 53.0\% \\
    & & 0.33 & 101 & 50  & 102.0\% \\
\bottomrule
\end{tabular}
\vspace{-0.5em}
}
\end{table*}

\noindent\textbf{Speedups for Different Reward Thresholds.}
\label{app:speedups-full}
Table~\ref{tab:full-speedup-results} details the training efficiency gains of PCPO across all
experimental settings. Speedup is calculated based on the number of epochs
required to reach specific reward thresholds during training. As training
involves stochastic sampling of prompts and seeds, speedup values are subject
to noise from the stochastic sampling of prompts and random seeds per iteration.

\noindent\textbf{More Convergence Plots.}
Figure~\ref{fig:reward-traj-comparison-2} shows the reward optimization traces
and clipping fractions for the default, smaller batch size of 256 in DDPO.

\begin{figure}[!t]
    \begin{minipage}[b]{0.48\linewidth}
        \centering
        \includegraphics[width=\linewidth]{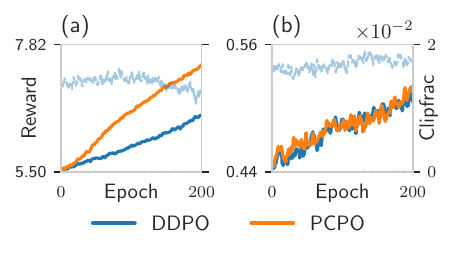}
        \caption{Training trajectories on smaller batch for PCPO (\textcolor{orange}{orange})
        and DDPO (\textcolor{blue}{blue}) on (a) Aesthetics, (b) BERTScore rewards.}
        \label{fig:reward-traj-comparison-2}
    \end{minipage}
    \hfill
    \begin{minipage}[b]{0.48\linewidth}
        \centering
        \includegraphics[width=\linewidth]{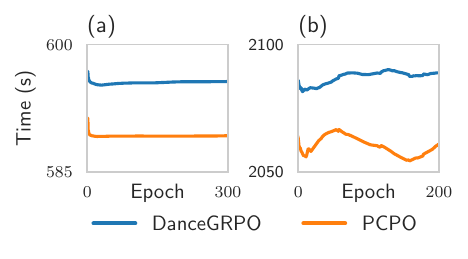}
        \caption{Wall-clock time per epoch for DanceGRPO (\textcolor{blue}{blue}) and
        PCPO (\textcolor{orange}{orange}) for (a) SD1.4, (b) FLUX training.}
        \label{fig:wall_clock_placeholder}
    \end{minipage}
\end{figure}

\noindent\textbf{Wall-Clock Time and Computational Cost.}
\label{app:wall_clock}
PCPO introduces no computational overhead compared to the baseline (Figure~\ref{fig:wall_clock_placeholder}).
While baselines require an additional backward SDE sampler step to compute $p_\theta$ for
policy updates, PCPO only requires the network forward pass for noise/velocity prediction, which baselines also compute.
By saving this SDE sampler step, PCPO achieves a slightly lower wall-clock time per epoch.

\noindent\textbf{Soundness of \(\log \rho_t \approx \rho_t - 1\) Approximation.}
\label{app:rho_approximation}
We justify our choice of using the log-ratio \(\log \rho_t\) in the
PCPO objective (\eqref{eq:log-hinge-loss}) via a Taylor approximation
from four aspects.
\begin{enumerate}
    \item \textbf{Empirical Distribution of \(\rho_t\):}
    We tracked the distribution of
    $\rho_t$ during training (from scratch and near convergence) for both diffusion (DDPO, SD1.5)
    and flow models (DanceGRPO, FLUX). Figure~\ref{fig:rho_plots} shows that $\rho_t$ remains
    close to 1.0, with a standard deviation on the order of $10^{-4}$.
    This confirms that the policy operates precisely in the region where the approximation is valid,
    making the Taylor approximation error negligible.
    \item \textbf{Error Bounding:}
    The clipping mechanism of PPO and GRPO provides a
    strong bound on the approximation error. In T2I alignment, the clip range $\xi$ is
    typically set to a very small value (e.g., $10^{-4}$ for all of our experiments).
    The gradient in our objective (\eqref{eq:log-hinge-loss}) only flows when
    $|\log \rho_t|$ is within this clipping range. This means the Taylor approximation
    is only ever utilized in the exact region where it is most accurate
    ($|\rho_t - 1| \approx |\log \rho_t| < \xi \ll 1$), making the effective approximation error
    negligible (on the order of $\mathcal{O}(\xi^2)$).
    \item \textbf{Numerical Stability:}
    From a numerical perspective, we argue that a more significant
    source of instability---which our $\log \rho_t$ formulation
    \emph{also} fixes---is the numerical precision error in computing
    $\rho_t$ itself. As shown in Algorithm~\ref{alg:rho_comp},
    standard PPO/GRPO must compute
    $\rho_t = \exp(\log \pi_\theta - \log \pi_{\text{old}})$. PCPO, in
    contrast, operates \emph{directly} in log-space by using
    $\log \rho_t = \log \pi_\theta - \log \pi_{\text{old}}$ in the
    objective.
    While $\rho_t$ should be strictly 1.0 for an on-policy update, our
    per-step measurements typically show a mean value of 0.99997 to 0.99998.
    This suggests that the floating point error is larger than the Taylor expansion
    error $\sim \xi^2/2 $.
    \item \textbf{Theoretical Justification:} Lastly, even if the approximation
    were to fail, our formulation remains sound. As shown by \citet{huangppoclip}, the objective
    from \eqref{eq:hinge-loss} is a hinge loss where $\rho_t - 1$ acts as a "classifier."
    Their work demonstrated that this term can be swapped with $\log \rho_t$
    (among other variants $\sqrt{\rho_t} - 1$, $\pi_\theta - \pi_\text{old}$)
    while yielding similar performance. This validates our log-hinge objective as
    a robust choice on its own merits, independent of the approximation.
\end{enumerate}

\begin{figure}[!t]
    \vspace{-1em}
    \centering
    \includegraphics[width=0.7\linewidth]{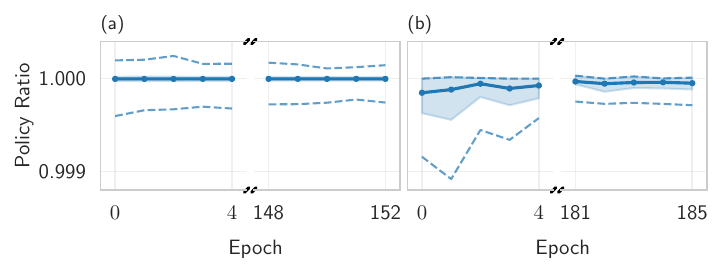}
    \vspace{-1em}
    \caption{Empirical distribution of the policy ratio $\rho_t$ during
    training for (a) DanceGRPO (FLUX) and (b) DDPO (SD1.5).
    In all cases, the mean, min, and max values remain exceptionally close
    to 1.0, validating the Taylor approximation $\log \rho_t \approx \rho_t - 1$.
    Shade heights correspond to $2 \times \text{std}(\rho_t)$.}
    \label{fig:rho_plots}
\end{figure}

\begin{algorithm}[!t]
    \caption{Ratio Computation}
    \label{alg:rho_comp}
    
    % --- Left Column: Baseline ---
    \begin{minipage}[t]{0.48\linewidth}
        \textbf{Baseline (PPO/GRPO)}
        \begin{algorithmic}[1]
            \State $\log p_\theta \gets \text{policy.eval}(\text{state}, \text{action})$
            \State $\log p_{\text{old}} \gets \text{old\_policy.eval}(\text{state}, \text{action})$
            \State $\log \rho \gets \log p_\theta - \log p_{\text{old}}$
            \State $\rho \gets \exp(\log \rho)$ \textcolor{red}{\quad \textit{// Unstable step}}
            \State $\text{loss} \gets f(\rho, A)$
        \end{algorithmic}
    \end{minipage}
    \hfill % This creates the space between the two columns
    % --- Right Column: PCPO (Ours) ---
    \begin{minipage}[t]{0.48\linewidth}
        \textbf{PCPO (Ours)}
        \begin{algorithmic}[1]
            \State $\log p_\theta \gets \text{policy.eval}(\text{state}, \text{action})$
            \State $\log p_{\text{old}} \gets \text{old\_policy.eval}(\text{state}, \text{action})$
            \State $\log \rho \gets \log p_\theta - \log p_{\text{old}}$
            \State \textcolor{blue}{\quad \textit{// No exp() step}}
            \State $\text{loss} \gets f(\log \rho, A)$ \textcolor{blue}{\quad \textit{// Stable}}
        \end{algorithmic}
    \end{minipage}
\end{algorithm}

\noindent\textbf{Gradient Stability Analysis.}
\label{app:grad_analysis}
To provide a quantitative analysis of training stability, we measured
the mean of the absolute gradient, $\mathbb{E}[|g_t|]$, and the variance
of the absolute gradient, $\text{Var}(|g_t|)$, for the first layer of the U-Net
or transformer.
This analysis was performed on both DDPO (SD1.5) and DanceGRPO (FLUX)
frameworks during two training phases: (i) the initial 5 epochs
(starting from scratch) and (ii) 5 epochs near convergence (loaded from checkpoints at target reward levels).

The results, shown in Figure~\ref{fig:grad_plots_placeholder}, confirm
PCPO's superior stability. While the gradient statistics are similar for
both methods at the very start of training, a significant gap emerges as
training progresses. Near convergence, the baseline models exhibit gradients with
larger mean and variance.
This provides direct, quantitative evidence that PCPO mitigates gradient instabilities,
leading to smoother and more stable training dynamics.

\begin{figure}[!bt]
    \centering
    \includegraphics[width=0.7\linewidth]{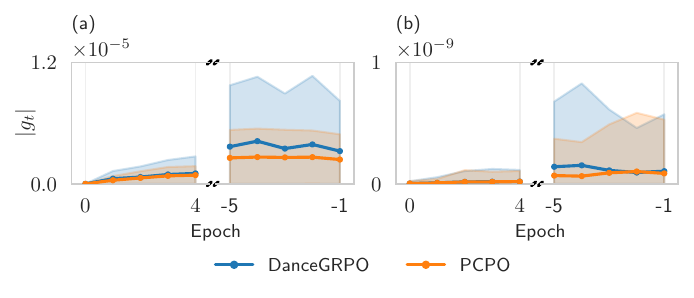}
    \vspace{-0.5em}
    \caption{Plots of $|g_t|$ for the network's first layer during training.
    Shades indicate $ 2 \times \text{std}(|g_t|)$.}
    \label{fig:grad_plots_placeholder}
\end{figure}

\section{Connection to REINFORCE Eligibility Vector}
\label{app:reinforce_analogy}

To formalize the connection between our proportionality principle and established
RL theory, we analyze the REINFORCE algorithm~\citep{Williams:92,
Sutton1998}. REINFORCE provides a first-principles, Monte Carlo estimator for
the policy gradient.

For an episodic, non-discounted ($\gamma=1$) task with a terminal reward, the
parameter update is a sum over all timesteps $t$. In our setting, the return
$G_t$ is the same terminal advantage $A$ for all steps. The
update rule is therefore:
\begin{equation}
    \Delta\theta \propto \sum_{t=1}^{T} A \mathbf{e}_t(\theta),
    \quad \mathbf{e}_t(\theta) = \nabla_\theta \log \pi_\theta(a_t|s_t)
    \label{eq:reinforce_update}
\end{equation}
As noted by \citet{Sutton1998}, the term $\mathbf{e}_t(\theta)$ is the \emph{eligibility vector}.
This vector is the core component containing the policy's parameter gradients, and is the
"only place that the policy parameterization appears in the algorithm."
Assuming each action contributes equally to the final reward, REINFORCE assigns credit
scales each eligibility vector by the same advantage $A$ for fair credit assignment.

We can now draw a direct analogy to our on-policy case (justified in Figure~\ref{fig:on-policy-justification}).
From our analysis, the gradient contribution from a single timestep $t$ is
proportional to:
\begin{equation}
    \Delta\theta_t \propto A \cdot w(t) \cdot \underbrace{\bigl[
    (\nabla_\theta \hat{\pmb\varepsilon}_\theta^{(t)} \!\cdot\!
    \pmb{\epsilon}_{\text{old}}^{(t)}) \bigr]}_{\text{Analogous to } \mathbf{e}_t(\theta)}
    \label{eq:our_update}
\end{equation}
The term $\nabla_\theta \hat{\pmb\varepsilon}_\theta^{(t)} \!\cdot\!
\pmb{\epsilon}_{\text{old}}^{(t)}$ is our core policy gradient, which is
analogous to the eligibility vector $\mathbf{e}_t(\theta)$. However, as
Equation~\ref{eq:our_update} shows, this "eligibility vector" is
multiplied by $w(t)$ \emph{before} being scaled by the advantage $A$.
Assuming the expected statistics of the dot product term are constant across
timesteps, the overall gradient magnitude for each step becomes directly
proportional to its weight, $w(t)$.

This $w(t)$ is an arbitrary scaling factor that arises from the sampler's
mathematics, not a deliberate credit assignment choice; i.e. proportional
to the integration interval \(\Delta t\). It breaks the uniform
credit assignment of the REINFORCE framework by non-uniformly and
inconsistently scaling the gradient contribution from each step. PCPO restores
sound credit assignment by ensuring that $w(t)$ is proportional to the timestep
\(\Delta t\), its true contribution to the trajectory.

\begin{figure}[!bt]
  \centering
  \vspace{-1em}
  \includegraphics[width=0.4\linewidth]{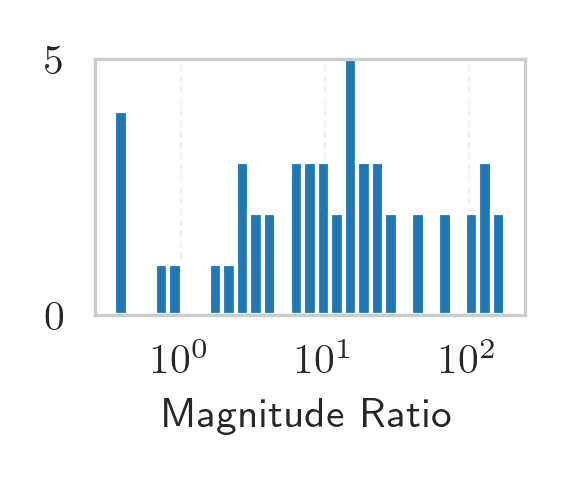}
  \vspace{-1em}
  \caption{Magnitude ratio of the dot product term to the squared norm term
  in Equation~\ref{eq:Delta} for the first off-policy data point in DDPO.
  The dot product term is dominant even in the off-policy case, justifying
  our simplification.}
  \label{fig:on-policy-justification}
\end{figure}

\section{Implicit Reward Guidance}
\label{app:irg-implementation}

We introduce \textit{Implicit Reward Guidance (IRG)}, an inference-time
scaling mechanism that mitigates reward overoptimization by interpolating between
a base model and one or more RL fine-tuned models.
This principle was concurrently developed in the Reinforcement Learning
Guidance (RLG) framework~\citep{jin2025inferencetimealignmentcontroldiffusion};
our work further extends the concept by demonstrating its application to
dynamically compose multiple learned rewards.

The theoretical motivation for this stems from D3PO~\citep{yang2024using},
which interpreted the learned policy as implicitly encoding a reward signal
\(\log \big({p_\theta(\mathbf{x}_0 \mid \mathbf{x}_t)}/{p_\text{ref}(\mathbf{x}_0 \mid \mathbf{x}_t)}\)\big).
Following CFG++ \citep{CFG++}, IRG frames the sampling process as an optimization
problem where, given a noisy state \(\mathbf{x}_t\), we aim to find a sample \(\mathbf{x}_0\)
that maximizes the implicit reward function.

To guide the sampling process, we can take a gradient step in the direction that maximizes
this reward. Approximating the predicted denoised sample \(\hat{\mathbf{x}}_0\) as a projection
of \(\mathbf{x}_t\), the gradient of the reward with respect to \(\hat{\mathbf{x}}_0\) simplifies to:
\[ \nabla_{\hat{\mathbf{x}}_0} \log \frac{p_\theta(\mathbf{x}_0 \mid \mathbf{x}_t)}{p_\text{ref}(\mathbf{x}_0 \mid \mathbf{x}_t)}
  \approx \frac{1}{\sigma_t^2} (\pmb\mu_\theta(\mathbf{x}_t) - \pmb\mu_\text{ref}(\mathbf{x}_t)), \]
where \(\pmb\mu_\theta\) and \(\pmb\mu_\text{ref}\) are the means of the one-step reverse
transition distributions for the target and reference policies, respectively.
This gradient term represents the direction of maximal reward increase.

While CFG++ adds this guidance term of the denoised sample \(\hat{\mathbf{x}}_0\) before
renoising, this can be unstable for IRG. In IRG, the guidance term comes from
a separate model that is distinct from the generative model being used for sampling.
Using different models to compute separate noise vectors for the Tweedie mean
estimation and the renoising step can destabilize the DDIM inversion process.

Therefore, we propose an anlogous but more stable formulation where the guidance is
applied directly to the noise prediction. By defining a guided noise estimate
\(\hat{\pmb\varepsilon}^\text{IRG}(\mathbf{x}_t) := \hat{\pmb\varepsilon}_\text{ref}(\mathbf{x}_t)
+ \lambda (\hat{\pmb\varepsilon}_\theta(\mathbf{x}_t) - \hat{\pmb\varepsilon}_\text{ref}(\mathbf{x}_t))\),
we use this single, consistent noise vector for both the Tweedie mean estimation
and the renoising step of the DDIM update. This ensures stability while effectively
steering the generation process toward higher-reward outcomes.

Furthermore, IRG can compose multiple learned rewards. Given \(n\) models
\(\{\theta_i\}_{i=1}^n\), each aligned to a distinct reward, IRG forms a mixed predictor:
\begin{equation*}
\hat{\pmb\varepsilon}^\mathrm{IRG}_{\{\theta_i\}}
= \hat{\pmb\varepsilon}_\mathrm{ref}
+ \sum_{i=1}^{n}\lambda_i
\bigl(\hat{\pmb\varepsilon}_{\theta_i}-\hat{\pmb\varepsilon}_\mathrm{ref}\bigr).
\label{eq:irg-multi-mix}
\end{equation*}
This principle extends directly to flow models by replacing the noise predictor
\(\hat{\pmb\varepsilon}\) with the velocity predictor \(\hat{\mathbf{u}}\):
\begin{equation*}
\hat{\mathbf{u}}^\text{IRG}_{\{\theta_i\}}
= \hat{\mathbf{u}}_\mathrm{ref}
+ \sum_{i=1}^{n}\lambda_i
\bigl(\hat{\mathbf{u}}_{\theta_i}-\hat{\mathbf{u}}_\mathrm{ref}\bigr).
\end{equation*}

As shown in Figure~\ref{fig:IRG}, IRG effectively mitigates reward overoptimization
by tuning the guidance scale \(\lambda\) at inference time. It also enables the flexible
composition of multiple objectives, providing a smooth interpolation between
competing rewards like aesthetics and text-alignment (Figure~\ref{fig:multi-IRG}).

\newpage

\vspace*{\fill}

\begin{figure}[h]
  \centering
  \includegraphics[width=\linewidth]{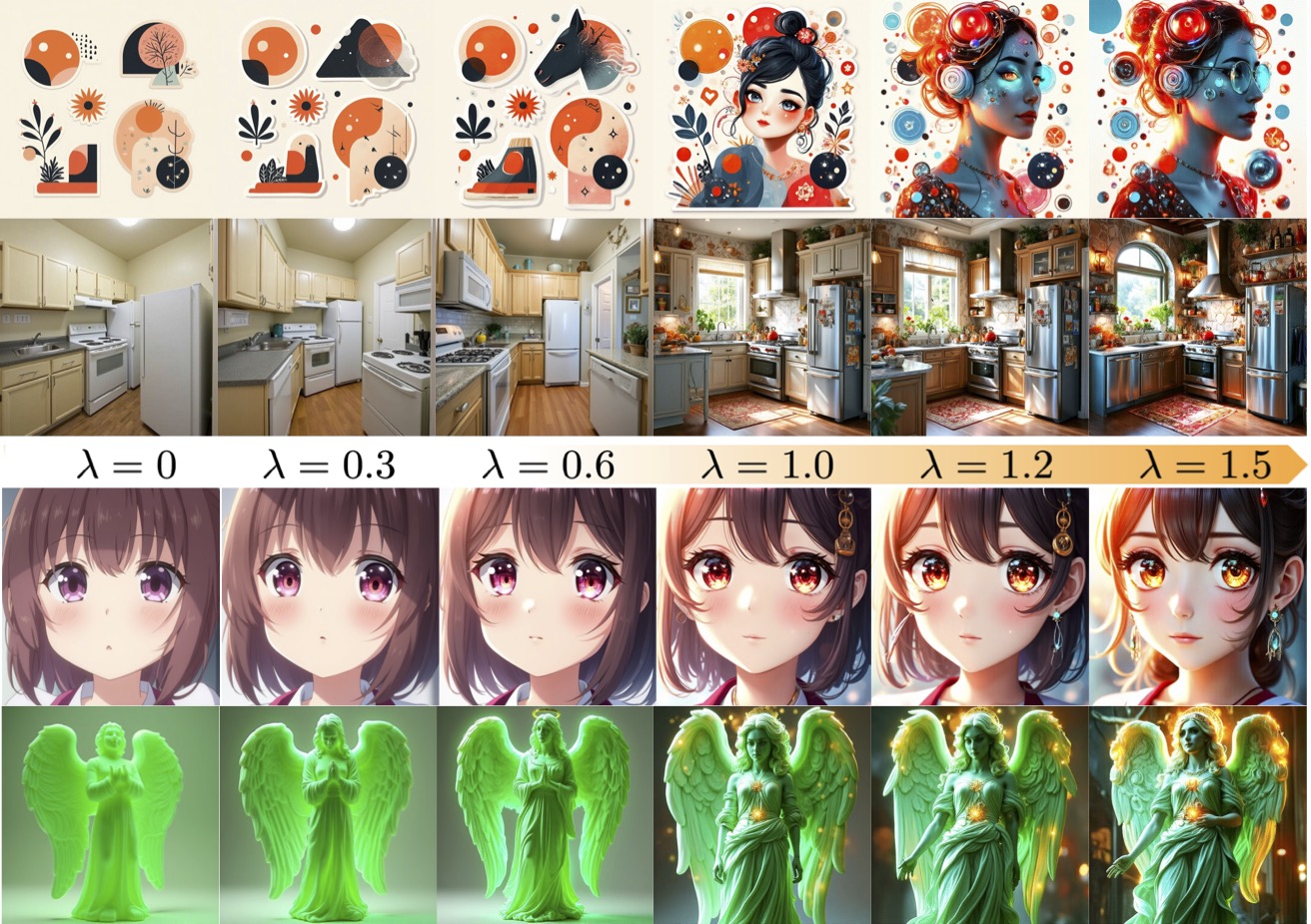}
  \caption{\textbf{Test-time scaling via IRG at various \(\lambda\) values:} 
  Top rows: Reducing the weight to \(\lambda < 1\) interpolates between the base model
  and fine-tuned model, alleviating reward overoptimization at inference-time.
  Bottom rows: Increasing the weight to \(\lambda > 1\) extrapolates the internal reward,
  thus can be used to enhance visual appeal.
  Full prompt list in Appendix~\ref{sec:prompts-for-figures}.
}
  \label{fig:IRG}
\end{figure}

\vspace*{\fill}

\begin{figure}[h]
  \centering
  \includegraphics[width=\linewidth]{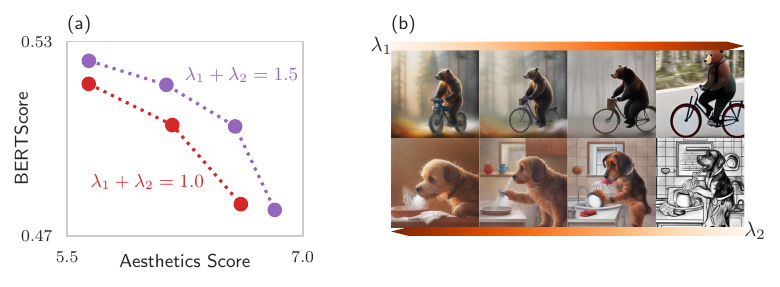}
  \caption{IRG tradeoff.
    (a) Aesthetics-BERTScore curves for different guidance scales,
    showing that increasing aesthetic guidance \(\lambda_1\) boosts visual quality
    but reduces semantic alignment, and vice versa for semantic guidance \(\lambda_2\).
    (b) Generated samples for two prompts (top: a bear riding a bike;
    bottom: a dog washing the dishes) arranged by increasing \(\lambda_1\) (left arrow)
    and \(\lambda_2\) (right arrow) for \(\lambda_1 + \lambda_2 = 1.5\),
    demonstrating smooth interpolation between aesthetic and semantic objectives.}
  \label{fig:multi-IRG}
\end{figure}

\vspace*{\fill}

\newpage

\section{Full Prompts for Figures}
\label{sec:prompts-for-figures}

\begin{itemize}
    \item[\textbf{Figure~\ref{fig:image_grid}:}]
        \begin{itemize}
            \item[(a)] \textbf{Aesthetics reward:} \emph{cat, bear, butterfly, chicken,
            bee, camel}.
            \item[(b)] \textbf{HPSv2.1 reward:}
                \begin{itemize}
                    \item \emph{A Thangka painting depicting the devil controlling a group of people in a circular formation.}
                    \item \emph{A painting of a monkey wearing gold headphones and sunglasses looking up at a starry night sky.}
                    \item \emph{Matador challenging a bull in a desert-filled ramen bowl.}
                \end{itemize}
        \end{itemize}
\end{itemize}

\begin{itemize}
  \item[\textbf{Figure~\ref{fig:IRG}:}]
    \begin{itemize}
        \item \emph{Minimalist sticker art featuring abstract designs by Victor Ngai, Kilian Eng, and Lois Van Baarle.}
        \item \emph{The kitchen has a stove and a refrigerator.}
        \item \emph{A close-up portrait of a cute anime girl with extremely detailed eyes, featured as a key visual in official media.}
        \item \emph{A green 3D-printed Biblically accurate angel.}
    \end{itemize}
\end{itemize}

\begin{itemize}
  \item [\textbf{Figure~\ref{fig:progress-1}}]
  \begin{itemize}
    \item \emph{a bat playing chess}
    \item \emph{a bird riding a bike}
    \item \emph{fish}
    \item \emph{lion}
  \end{itemize}
\end{itemize}

\begin{itemize}
  \item [\textbf{Figure~\ref{fig:progress-2}}]
  \begin{itemize}
    \item \emph{A portrait of a lion goddess wreathed in flame, posing in full body.}
    \item \emph{A portrait painting of Stephany Eisnor.}
    \item \emph{The kitchen has a stove and a refrigerator.}
  \end{itemize}
\end{itemize}

\begin{itemize}
  \item [\textbf{Figure~\ref{fig:progress-3}}]
  \begin{itemize}
    \item \emph{Several people waiting at a bus stop in a dark city night, depicted in a digital illustration.}
    \item \emph{Side-view portrait of a knight with a skull helmet adorned with spikes, depicted in a tenth century stained glass window.}
    \item \emph{The album cover for the band Underealm features an evil entity in a sophisticated suit, with dark and intricate details.}
  \end{itemize}
\end{itemize}

\begin{itemize}
  \item [\textbf{Figure~\ref{fig:progress-4}}]
  \begin{itemize}
    \item \emph{A Thangka painting depicting the devil controlling a group of people in a circular formation.}
    \item \emph{A painting of a monkey wearing gold headphones and sunglasses looking up at a starry night sky.}
    \item \emph{Matador challenging a bull in a desert-filled ramen bowl.}
    \item \emph{Several people waiting at a bus stop in a dark city night, depicted in a digital illustration.}
    \item \emph{Side-view portrait of a knight with a skull helmet adorned with spikes, depicted in a tenth century stained glass window.}
    \item \emph{The album cover for the band Underealm features an evil entity in a sophisticated suit, with dark and intricate details.}
    \item \emph{The image features a copper material with references to various art platforms and artists.}
    \item \emph{Anor Londo, Dark Souls, with a dragon flying in the distance at night.}
    \item \emph{Two Tyrannosaurus rexes engaged in a boxing match.}
    \item \emph{The image depicts Yoshi Island created by Beeple.}
    \item \emph{Digital painting featuring Soviet realism and grunge elements with a range of artistic influences, created by multiple artists and showcased on ArtStation.}
    \item \emph{An artwork from Dan Mumford collection featuring a mage invoking divine gods during a storm with lightnings.}
  \end{itemize}
\end{itemize}

\newpage

\section{More Qualitative Results}
\label{sec:more-qualitative-results}

\vspace*{\fill}

\begin{figure}[!h]
\centering
\includegraphics[width=0.9\linewidth]{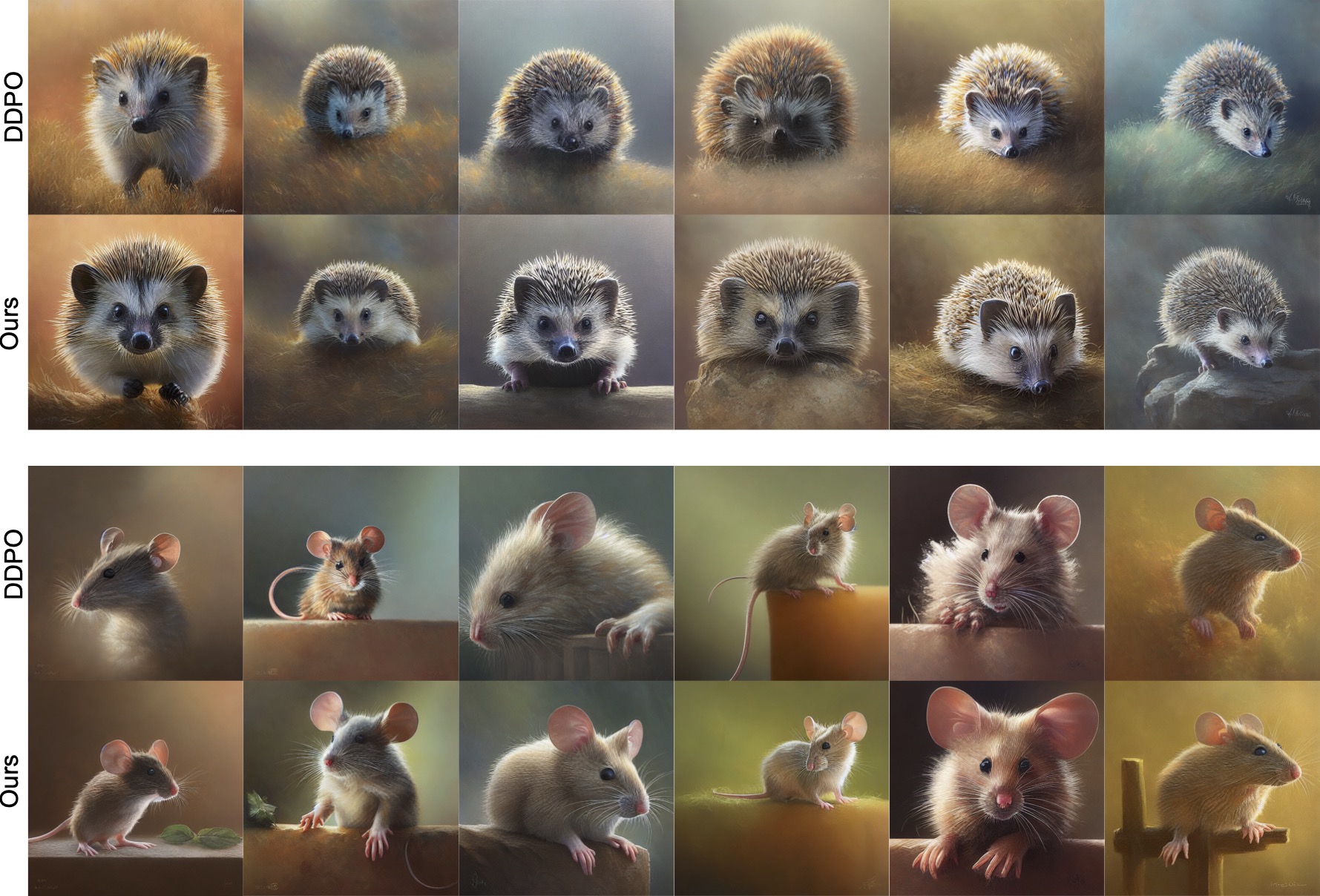}
\caption{
\textbf{PCPO mitigates mode collapse at high reward levels.} For the prompts
"hedgehog" and "mouse" (6 consecutive seeds), DDPO (top) exhibits noticeable
mode collapse. In contrast, PCPO (bottom) maintains high visual diversity and
sharpness at the same reward level.
}
\label{fig:comp-consecutive-seed}
\end{figure}

\vspace*{\fill}

\begin{figure}[!h]
\centering
\includegraphics[width=0.9\linewidth]{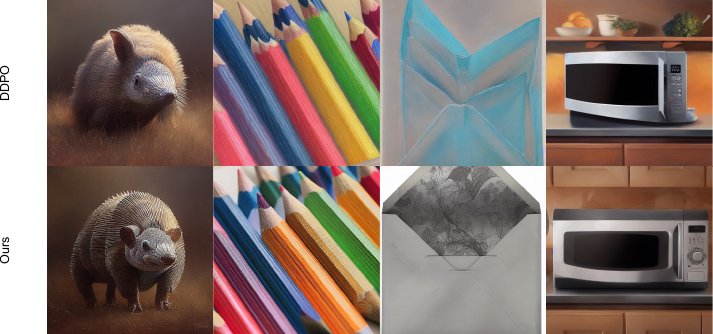}
\caption{
\textbf{PCPO generalizes better to unseen prompts than DDPO.} For prompts
that were not in the fine-tuning prompt dataset ("armadillo", "colored pencils",
"envelope", "microwave"), PCPO retains the generation capabilities much better
than the baseline DDPO.}
\label{fig:comp-consecutive-seed2}
\end{figure}

\vspace*{\fill}

\newpage

\begin{figure}[h]
\centering
\includegraphics[width=0.83\linewidth]{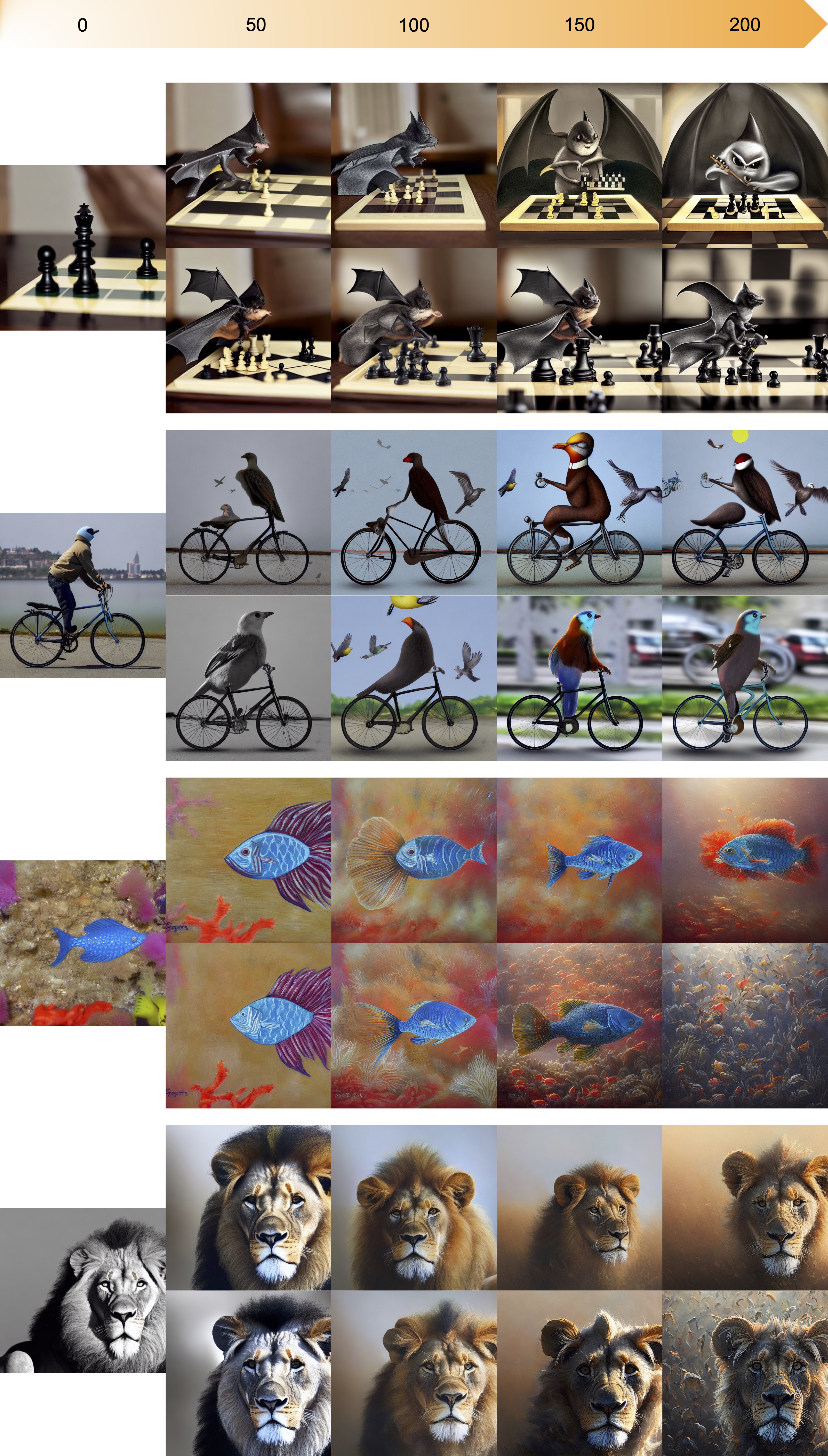}
\caption{
\textbf{PCPO demonstrates superior training stability, avoiding premature
collapse.} This figure tracks the qualitative progression per epoch for DDPO
(top rows) and PCPO (bottom rows). The baseline DDPO shows early image
degradation across both BERTScore (first two groups) and Aesthetics (last two
groups) rewards. PCPO consistently maintains higher fidelity, although it can
also exhibit collapse at extremely high reward levels (e.g., Aesthetics at
epoch 200).
}
\label{fig:progress-1}
\end{figure}

\begin{figure}[h]
\centering
\includegraphics[width=0.9\linewidth]{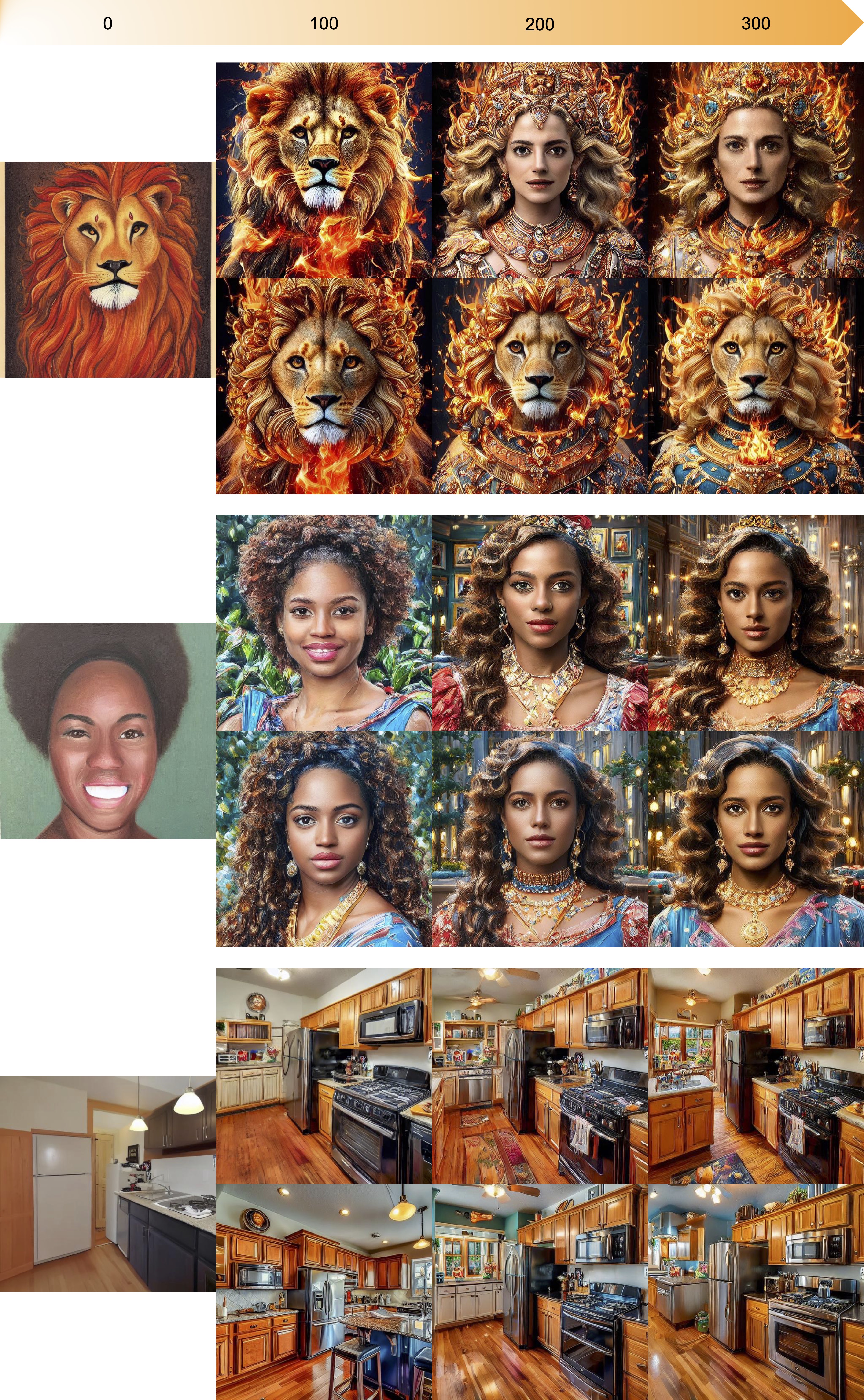}
\caption{
\textbf{DanceGRPO SD1.4, progression per epoch.}
Within each group, the top row shows DanceGRPO (SD1.4) results, the bottom row shows PCPO.
PCPO better preserves fidelity and text-image alignment, DanceGRPO exhibits noticeable model collapse.
}
\label{fig:progress-2}
\end{figure}

\begin{figure}[h]
\centering
\includegraphics[width=0.9\linewidth]{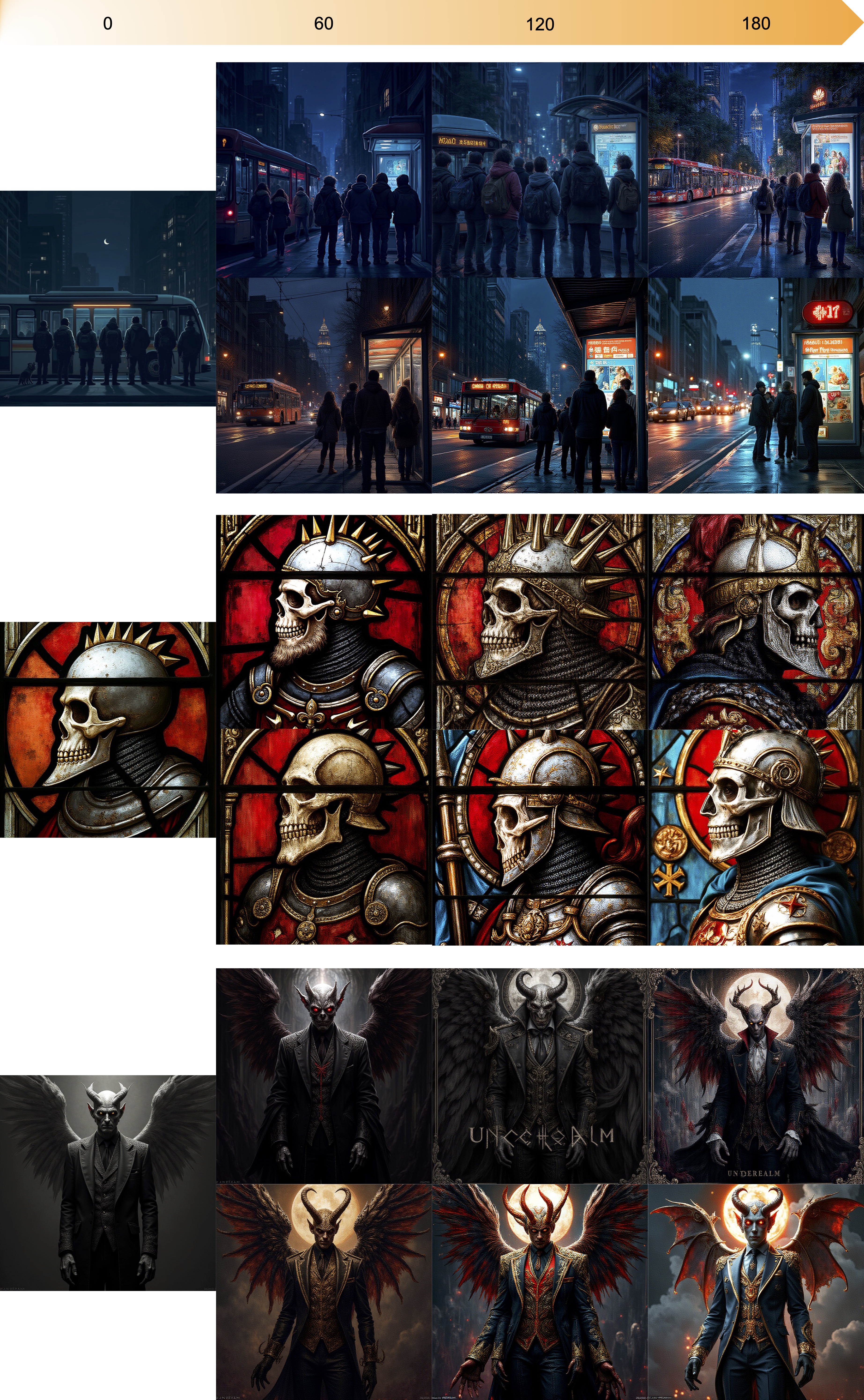}
\caption{
\textbf{DanceGRPO FLUX, progression per epoch.}
Within each group, the top row shows DanceGRPO (SD1.4) results, the bottom row shows PCPO.
PCPO better preserves fidelity, DanceGRPO exhibits noticeable visual artifacts.
}
\label{fig:progress-3}
\end{figure}

\begin{figure}[h]
\centering
\includegraphics[width=0.95\linewidth]{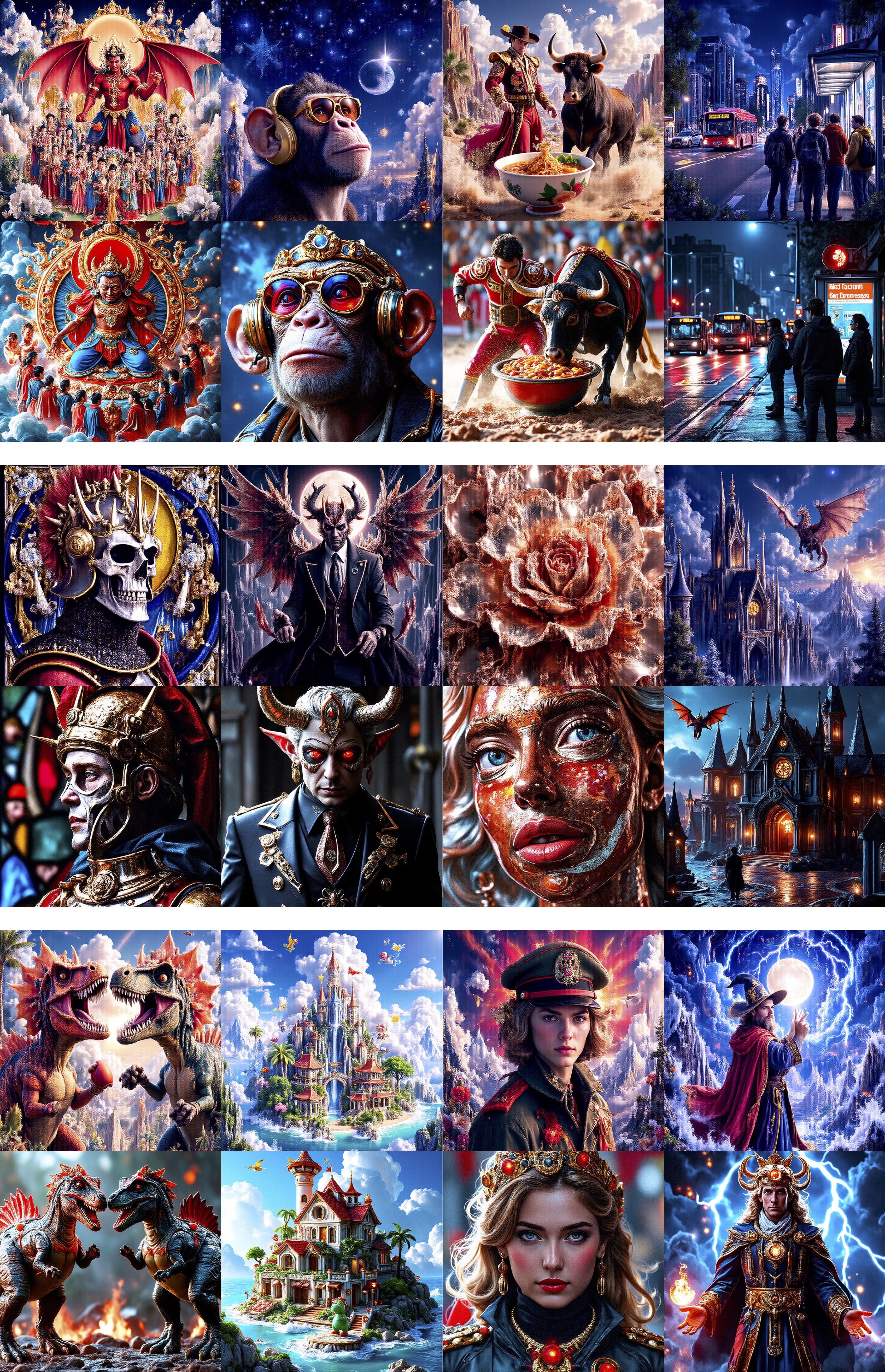}
\caption{
\textbf{PCPO is more robust to quality degradation from prolonged training.}
A comparison at epoch 240 on the FLUX model. While extensive training for
minimal reward gains causes the baseline (top row) to suffer significant image
degradation and artifacts, PCPO (bottom row) maintains its quality much more
effectively.
}
\label{fig:progress-4}
\end{figure}

\FloatBarrier

\section{Use of Large Language Models}
\label{app:llm-usage}
We utilized Large Language Models (LLMs) as assistive tools throughout the research
and manuscript preparation process. For research ideation, Google's Gemini was
instrumental in suggesting the use of LMMs to control for prompt-level variance
in our analysis. OpenAI's ChatGPT, Github's Copilot, and Gemini were employed to
help draft and refine code implementations.
In preparing the manuscript, ChatGPT and Gemini were employed to compose initial
paragraph drafts from detailed research notes. In addition, these LLMs assisted in
restructuring sentences for improved readability.

\end{document}